\colorlet{myorange}{red!30!yellow}
\DeclareRobustCommand{\qed}{%
	\ifmmode 
	\else \leavevmode\unskip\penalty9999 \hbox{}\nobreak\hfill
	\fi
	\quad\hbox{\qedsymbol}}
\newcommand{\openbox}{\leavevmode
	\hbox to.77778em{%
		\hfil\vrule
		\vbox to.675em{\hrule width.6em\vfil\hrule}%
		\vrule\hfil}}
\newcommand{\qedsymbol}{\openbox}
\newenvironment{proof}[1][\proofname]{\par
	\normalfont
	\topsep6\p@\@plus6\p@ \trivlist
	\item[\hskip\labelsep\itshape
	#1.]\ignorespaces
}{%
	\qed\endtrivlist
}
\newcommand{\proofname}{Proof}
\newtheorem{thm}{\protect\theoremname}
\newtheorem{lem}[thm]{Lemma}
\newtheorem{defn}[thm]{\protect\definitionname}
\newtheorem{prop}[thm]{\protect\propositionname}
\newtheorem{conj}[thm]{\protect\propositionname}
\newtheorem{post}{Postulate}
\newtheorem{example}[thm]{\protect\examplename}  
\newcommand{\T}{\mathcal{T}}
\newcommand{\M}{\mathcal{M}}
\newcommand{\X}{\mathcal{X}}
\newcommand{\G}{\mathcal{G}}
\newcommand{\A}{\mathcal{A}}
\newcommand{\I}{\mathcal{I}}
\newcommand{\Ones}{{\bf\mathbbm{1}}}
\newcommand{\ones}{{\bf 1}}
\newcommand{\tr}{\operatorname{tr}}
\newcommand{\inv}[1]{{#1^{-1}}}
	\newcommand\domi[1]
	\newcommand\marek[1]{\marginpar{\tiny\textcolor{cyan}{Marek: #1}}}
	\newcommand\michel[1]{\marginpar{\tiny\textcolor{red}{Michel: #1}}}
	\renewcommand\michel[1]{}
\providecommand{\definitionname}{Definition}
\providecommand{\examplename}{Example}
\providecommand{\propositionname}{Proposition}
\providecommand{\theoremname}{Theorem}
\crefname{figure}{Fig.}{Figs.}
\crefname{prop}{proposition}{propositions}
\author{Michel Besserve$^{1,2,3}$, Naji Shajarisales$^1$, Bernhard Sch\"olkopf$^1$, Dominik Janzing$^1$\\1. Max Planck Institute for Intelligent Systems, T\"ubingen, Germany\\2. Max Planck Institute for Biological Cybernetics, T\"ubingen, Germany \\ 3. Max Planck ETH Center for Learning Systems}
\begin{document} 
	\excludecomment{commenth}
\excludecomment{comment}
\title{Group invariance principles for causal generative models}
\maketitle


\vskip 0.3in

\begin{abstract}
The postulate of independence of cause and mechanism (ICM) has recently led to several new causal discovery algorithms. The interpretation of independence and the way it is utilized, however, varies across these methods. Our aim in this paper is to propose a group theoretic framework for ICM to unify and generalize these approaches. In our setting, the cause-mechanism relationship is assessed by comparing it against a null hypothesis through the application of random generic group transformations.  
We show that the group theoretic view provides a very general tool to study the structure of data generating mechanisms with direct applications to machine learning. 
\end{abstract} 
\section{Introduction}

Inferring causal relationships from empirical data is a challenging
problem with major applications. While the problem of inferring such relations between arbitrarily many
random variables (RVs) has been extensively addressed 
via conditional statistical independences in graphical models  \citep{spirtes2000causation,pearl2000causality},
several limitations of this framework have motivated the search for
new perspectives on causal inference. A major contribution to this
line of research is the postulate of Independence of Cause and Mechanism
(ICM) \citep{janzing2010causal,LemeireJ2012,anticausal}, which assumes
that causes and mechanisms are chosen independently by Nature and thus $P({\rm cause})$ and
$P({\rm effect}|{\rm cause})$ do not contain information about each other. Here, ``no information'' is either meant in the sense
of algorithmic independence \citep{janzing2010causal,LemeireJ2012} or
in the sense that semi-supervised learning does not work \citep{anticausal}.
A major interest of this framework is the development of several causal inference algorithms for cause-effect pairs
\citep{icml2010_062,frW_UAI,daniusis2012inferring,
janzing2012information,shajarisale2015,SgoJanHenSch15}; however, results in \citet{anticausal} also suggest it can be exploited in broader settings, providing guiding principle for the study of learning algorithms. Each of these methods addresses the causal inference problem with specific models,
and are thus usable only for a restricted set of applications. Principled
ways to generalize them to address new problems are yet unknown. In particular, it is unclear how the notion of ``independence'' should be defined for a given domain, and how it could impact the results. One conceptual difficulty of the ICM framework is that independence is assessed between two objects of
different nature: the input (or cause) and the mechanism; moreover, the appropriate notion of independence is not the usual statistical independence of RVs. 

In this paper, we suggest that a group theoretic framework can unify ICM-based approaches and provide useful tools to study generative models in general. This involves defining a group of generic transformations that perturb the relationship between mechanisms and causes, as well as an appropriate contrast function to assess the genericity of the cause-mechanism relationship. We show that this framework encompasses previous ICM approaches \citep{icml2010_062,shajarisale2015}.

 In addition, while previous methods based on ICM where focused on cause-effect pairs (where we need to infer which of the variables is the cause and which is the effect), the present paper shows that the group theoretic view provides a  framework to study causal generative models in a more general setting that includes latent variable models. 
 
In \cref{sec:occlusion}, we introduce our framework with an informal example in visual inference. \Cref{sec:framework} presents the framework in full generality. Then \cref{sec:gtview} shows how previous causal inference approaches fit in this framework. Finally \cref{sec:groupnmf} shows how it can be applied to analyze and improve unsupervised learning algorithms. Proofs are provided in appendix.

\section{An example in visual inference}\label{sec:occlusion}
\subsection{Occlusion and illusory contours}
In this section, we illustrate fundamental properties of the group theoretic approach by studying a simplified version of a basic inference problem for visual perception: the identification of partially occluded objects. In two dimensional naturalistic visual scenes, an object can partially mask other objects standing behind it in the scene. This phenomenon is usually well identified by the human visual system, but remains a major challenge for robust object detection in computer vision. Interestingly, even human perception of occlusion can be misled in specific examples of illusory contours. This is the case of the well know Kanizsa's triangle shown in \cref{fig:kaniza}. What is important for the illusory contour to emerge is the precise alignment between the edges of the Pac-Man-shaped inducers, instigating the completion of each aligned segment pair into a larger edge. 
One way to describe the specificity of such figure is to count the number of lines carrying the straight edges of the three Pac-Man shapes in the figure: there are only three lines, which is atypically (or suspiciously) small for a figure made of three objects totaling six straight edges. The idea that a configuration is "atypical" lies at the heart of our causal inference framework, and we can indeed formulate scene understanding tasks related to object occlusion as causal inference problems.

\begin{figure*}
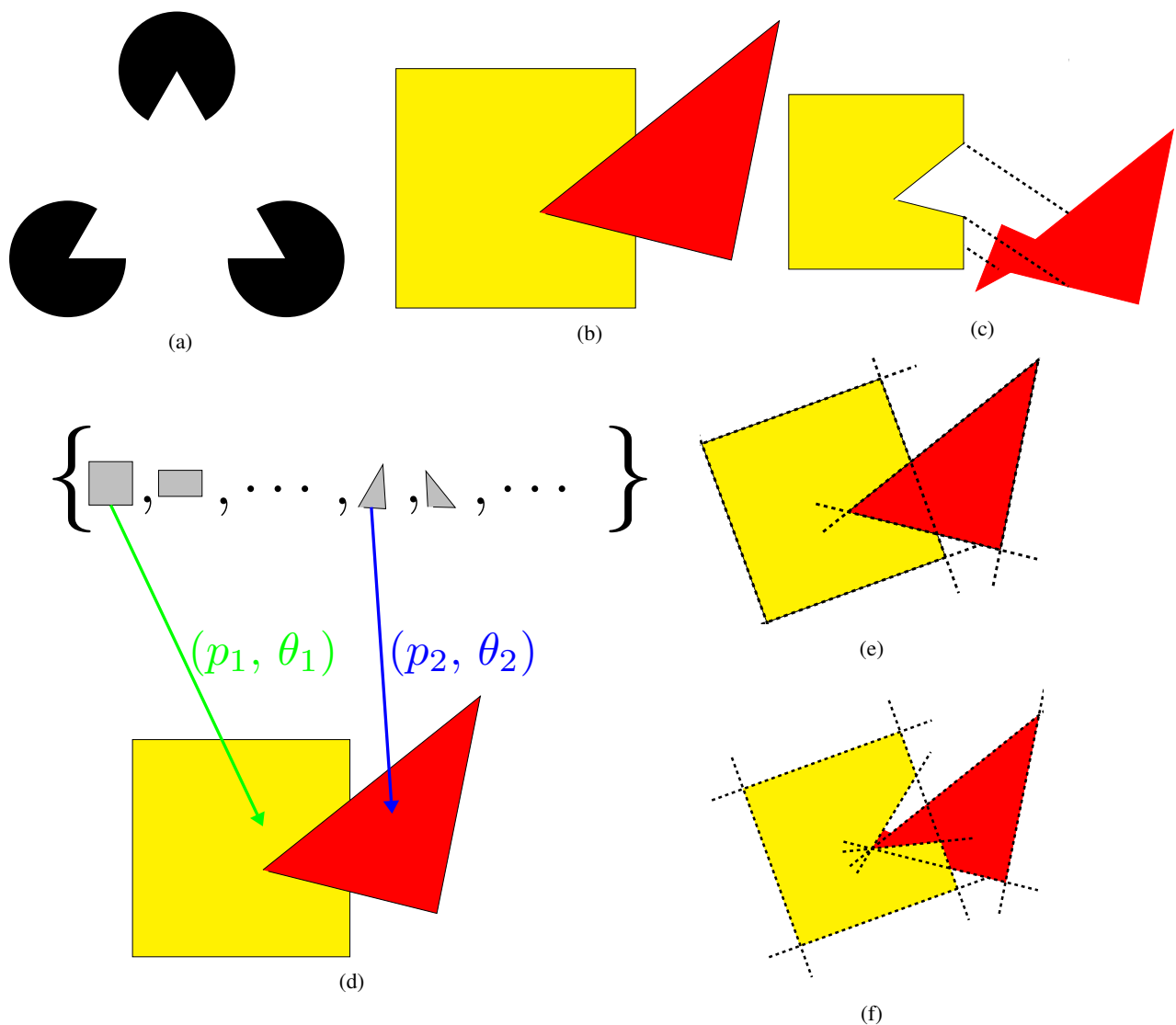

\begin{subfigure}{.3\textwidth}
\includestandalone[width=\textwidth]{figures/kaniza}
\subcaption{\label{fig:kaniza}}
\end{subfigure}
\hspace*{.5cm}
\begin{subfigure}{.34\textwidth}
\includestandalone[width=\textwidth]{figures/fig_occlusion1}
\subcaption{\label{fig:occlusion1}}
\end{subfigure}
\begin{subfigure}{.34\textwidth}
\includestandalone[width=\textwidth]{figures/fig_occlusion3}
\subcaption{\label{fig:occlusion3}}
\end{subfigure}\\
\begin{subfigure}{.6\textwidth}
\includestandalone[width=\textwidth]{figures/fig_occlusion2}
\subcaption{\label{fig:occlusion2}}
\end{subfigure}
\begin{minipage}{.3\textwidth}
\centering
\begin{subfigure}[t]{\textwidth}
\includestandalone[width=\textwidth]{figures/fig_occlusion1rand}
\subcaption{\label{fig:occlusion1rand}}
\end{subfigure}\\
\begin{subfigure}[t]{\textwidth}
\includestandalone[width=\textwidth]{figures/fig_occlusion3rand}
\subcaption{\label{fig:occlusion3rand}}
\end{subfigure}
\end{minipage}
\caption{(a) Kanizsa's triangle. (b) Scene of a yellow square occluded by a red triangle. (c) Example of objects leading to the same scene as (b) but where red occludes yellow (dashed segments link superimposed points). (d) Generative model for (b). (e) New scene resulting from an arbitrary rotation of the yellow object in (b). Dashed lines indicate straight lines carrying the edges of the objects.  (f) Same as (e) for the case presented in (b). See text for explanations.}
\vskip -.6cm
\end{figure*}

\subsection{Formulation of the causal inference problem}
We state the following scene understanding problem: two polygonal objects (with different colors) appear in a visual scene, occluding each other, and we want to infer which object partially occludes the other one. An example of such scene is represented on \cref{fig:occlusion1}. In this particular example, a likely interpretation of the scene is that the red triangle appears in front of a yellow square. However, one could imagine on the contrary that the yellow object is in the foreground and occludes the red object, by picking the objects shown in \cref{fig:occlusion3} for example. Such configuration is however intuitively unlikely if the precise positions and orientations of both objects are not ``tuned" to lead to the scene shown in \cref{fig:occlusion1}, that would give the ``illusion" that a yellow square is occluded. Such considerations have led vision scientists to formulate a \textit{generic viewpoint assumption} in order to perform inference \citep{freeman1994generic}. Such scene understanding problem can be considered as a causal inference problem, as it amounts to inferring a property of the generating mechanism (the objects and their configuration) that leads to an observation (the visual scene).

We now specify more precisely a generative model of the visual scene (see \cref{fig:occlusion2} for an illustration): from a large collection of polygons, a first object is selected and put in the scene at position $p_1$ with orientation $\theta_1$. Then a second object is selected from the collection and put on top of the first at position $p_2$ and orientation $\theta_2$. Under this generative model, both configurations (red in front or yellow in front) are possible. In order to determine which configuration is more likely, we resort to an additional postulate of independence of cause and mechanism. The cause being the pair of objects, and the mechanism being the set of positions and orientations they have in the scene, we assume that these last parameters are picked independently from the geometry of the chosen objects. As a consequence, if we apply a random rotation to one of the objects, we expect that for most cases, some global properties of the image will be preserved, such that the original scene can be qualified as ``typical''. In the following, we will see that the number of lines in the scene is a simple but very useful global property.

Indeed, if we apply now a random rotation to the yellow object under the hypothesis that the red triangle is the object in front, we obtain a modified scene in \cref{fig:occlusion1rand}, that is in some respect similar to the original figure. In particular, the total number of lines carrying objects' edges is 7 in both cases (4 for the square and 3 for the triangle). On the contrary, under the hypothesis that the yellow object is in front, we typically get a configuration like the one in \cref{fig:occlusion3rand}, which leads to a larger number of lines in the scene (at least 2 more, if we do not count the unknown hidden edges of the red object). To summarize this experiment, if we assume the red object is in front, the number of straight lines in the scene (7) is ``typical'' since an arbitrary rotation of one object will typically lead to the same number of edges. On the contrary, if we assume the yellow object is in front, the number of lines in the scene (still 7) is suspiciously low with respect to what it becomes when modifying the generative model with an arbitrary rotation applied to one object. To put it differently, under the ICM assumption that the orientations are chosen at random independently from the shape of objects put in the scene, configurations with a typical number of lines occur with high probability, such that it is much more likely that the red object lies in front.

\subsection{Group invariance view}
The above reasoning can be abstracted in order to better understand the critical elements that are used for causal inference. First, a generative model has been introduced: given two polygonal objects $O_1$ and $O_2$, a scene $S$ is generated according to the function
$$
S=m_{p_1,\theta_1,p_2,\theta_2}(O_1,O_2)
$$
where $m_{p_1,\theta_1,p_2,\theta_2}$ represents the mechanisms that puts first $O_1$ on the scene at position $p_1$ with orientation $\theta_1$, then puts $O_2$ at position $p_2$ with orientation $\theta_2$ in front of $O_1$.
We have seen that an important characteristic of the scene is the number of lines carrying edges. Lets call $C(S)$ this number for a given scene $S$. Then we inferred whether $C(S)$ is typical for the observed scene given an assumption about which object stands in front in the scene. Assume first the red object $R$ stands in front of the yellow one, the generative model is
\begin{equation}\label{eq:occlusion1}
S=m_{p_Y,\theta_Y,p_R,\theta_R}(Y,R).
\end{equation}
We test whether this model is typical by introducing a random rotation $r_\phi$ applied to the yellow object, such that we create a modified scene
$$
S_\phi=m_{p_Y,\theta_Y,p_R,\theta_R}(r_\phi Y,R)
$$
Then we say that \cref{eq:occlusion1} is typical if $C(S)=C(S_\phi)$ for most choices of $\phi$. This happens to be the case in our example, as we rotate the object around its center, which preserves the occlusion. As a consequence, being typical is reflect by the \textit{invariance} of the value $C$ with respect to a group of transformations: the rotations.

The alternative generative model (where the yellow object is in front) can be written as
\begin{equation}\label{eq:occlusion2}
S=m_{p_R,\theta_R,p_Y,\theta_Y}(R,Y).
\end{equation}
Such that we can check whether it is typical by comparing it to the rotated model
$$
S'_\phi=m_{p_R,\theta_R,p_Y,\theta_Y}(R,r_\phi Y)
$$
Then one can observe that for most choices of $\phi$\footnote{more precisely for almost all choices of $\phi$}, we have
$$
C(S'_\phi)\geq C(S)+2,
$$
and therefore the model described by \cref{eq:occlusion2} is atypical, as witnessed by the lack of invariance of $C$ values, and thus less likely than the model of  \cref{eq:occlusion1} to explain the scene.
The key elements that are used for causal inference are thus:
1/ a generative model of the observed data,
2/ a group of generic transformations (here rotations) that can be applied to the model to simulate "typical" configurations of the generative model and
3/ a contrast (here the number of lines in the scene) that can be evaluated on both the observed data and in typical configurations of the model. Invariance of the contrast to generic transformations then indicates observations are typical. All these elements will be used for defining a general framework for causal inference in section \ref{sec:framework}.

\section{Group theoretic framework}
\label{sec:framework}
\subsection{Background and related work}
Many machine learning approaches rely on statistical models in order to approximate observed data, ranging from simple linear regression models to the recently introduced Generative Adversarial Networks (GANs) \citep{goodfellow2014generative}. In order for these models to serve their purpose, they have to represent the observations as faithfully as possible. Such property can be evaluated in a purely statistical sense by testing whether the probability distribution of the model is as close as possible to the empirical distribution of the data (taking into account that such procedure should be properly designed to avoid overfitting). However, inferring a causal model goes beyond this statistical criterion by imposing that the fitted model should in some sense capture the structure of the true data generating process. Concepts pertaining to causality are well formulated using Structural Equation Models (SEMs) or Structural Causal Models (SCMs), which describe the relationship between different variables (observed or hidden) as a set of structural equations, each of them taking the form \footnote{in such equation, each right-hand-side variables may refer to either endogenous variables (i.e. a factor whose value is determine by other variables in the system under consideration) or exogenous variables (determined by factors outside of the system)}
\[
v_k \coloneqq f_k(v_1,\cdots,v_{n}).
\]  
Such equations represent more than algebraic dependencies between the variables, as indicated by the asymmetry of the ``colon-equals" symbol, which suggests that the left-hand side variable is in some way defined based on the right-hand side variables. Broadly construed, it means that this relation would still hold if an external agent were to intervene on one or several right-hand side variables (the so called do-operator), or alternatively that we can formulate counterfactuals : ``what would have happened if one right-hand side variable had been different" (see \citet{pearl2000causality} for an overview). As a consequence, a properly inferred causal generative model based on such structural equation offers more robustness to changes in the environment than are purely statistical models. This includes important cases related to transfer leaning such as covariate-shift or changes in the input distribution \citep{zhang2013domain,zhang2015multi,peters2016causal,bareinboim2016causal,rojas2015causal}, making causal generative models highly relevant in machine learning. In addition, a correct causal generative model offers a formalism to describe and understand the actual mechanisms underlying the observed data and is thus a central goal of experimental sciences.

From \cref{sec:occlusion}, we can deduce that our framework virtually probes structural equations by a counterfactual reasoning which could be stated as: ``what would happen if we were to apply a generic transformation to a given variable or mechanism of the SCM". This virtual intervention is represented in \cref{fig:genericintervention} for a SCM with a single cause and mechanism. The applied transformation is generic in the sense that it is sampled at random from a large set of transformations that turns a variable/mechanism in another one that is as likely to occur in a similar SCM. The outcome of this virtual intervention is tested by quantifying whether the counterfactual outcome is qualitatively different from the observed outcome for \textit{most} generic transformations. If it is, we say that our observations would be atypical for the considered causal model, such that this is unlikely for the causal generative model to have generated such observations. This is justified by an ICM assumption: in order to have generated the particular observation, it is unlikely that this precise mechanism would be selected by Nature independently from its input; in the contrary, the mechanism would likely need to be chosen specifically for this input to generate the observed outcome.

In our framework, the set of generic transformations is chosen to be a (compact) group. We will refer to the appendix for the relevant definitions regarding group theory. The readers may however just assume the compact group is a set of invertible transformations applied to causes and equipped with a "uniform" probability measure. The choice of this particular structure is motivated by the fact that group actions combine well with a general structural equation framework. Although extension to other structures (such as semigroups) may prove useful, we will see that the group setting can describe a large variety of causal generative models.
\michel{note that conjugacy is also a left group action on the mechanism, no need to consider it separately?}
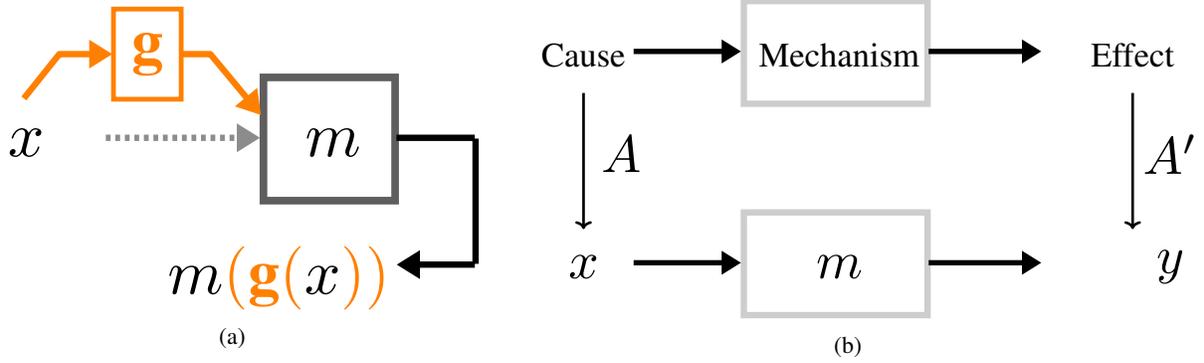
\begin{figure*}
	\begin{subfigure}{.4\textwidth}
		\centering
		\scalebox{.6}{\definecolor{cc47800}{RGB}{196,120,0}
\definecolor{cc47801}{RGB}{70,70,70}
\definecolor{c788a21}{RGB}{120,138,33}
\definecolor{c000001}{RGB}{0,0,1}
\definecolor{c007c5f}{RGB}{0,124,95}
\definecolor{grayDelete}{RGB}{140,140,140}

\begin{tikzpicture}[y=0.80pt, x=0.80pt, yscale=-1.000000, xscale=1.000000, inner sep=0pt, outer sep=0pt]
\begin{scope}
	\path[draw=cc47801,opacity=0.870,miter limit=10.00,line
	    width=4.825pt,rounded corners=0.0000cm] (361.4134,218.9056) rectangle
	    (465.1547,316.0917);
	
	\node (A) at (236.5694,266.5391) {};
	\node (B) at (359.5711,266.5391) {};
	\draw[-Triangle,line width=4pt,dashed,color=grayDelete] (A) edge (B);
	\node (C) at (465.5694,266.5391) {};
	\node (D) at (528.5711,266.5391) {};
	\node (Cprime) at (465.5694,366.5391) {};
	\node (Dprime) at (528.5711,366.5391) {};
	\draw[-Triangle,line width=4pt] (C) -- (D) -- (Dprime) -- (Cprime);

	\node (E) at (296.5694,200.5391) {};
	\node (G) at ($(E)-(2.7cm,0cm)$) {};
	\node (H) at ($(E)-(3.5cm,-1cm)$) {};
	\node (I) at ($(E)-(1.5cm,0cm)$) {};
	\draw[-Triangle,line width=4pt,line join=round,color=orange] (H) -- (G.center) -- (I);
	\path[draw=orange,line width=3pt] ($(E)+(-1.5cm,1cm)$) rectangle  ($(E)+(0cm,-1cm)$) node[pos=.5,scale=4,color=orange]{$\textbf{g}$};
	\node (F) at (319.5711,200.5391) {};
	\draw[-Triangle,line width=4pt,line join=round,color=orange] (E) -- (F.center) -- ($(B)+(0cm,-.5cm)$);
	
	\node[align=center,scale=4](input) at (4.9cm,7.6cm) {$x$};
	
	\node[align=center,scale=4](mech) at (11.7cm,7.6cm) {$m$};
	\node[align=right,scale=4,orange] (output) at (10.5cm,10.6cm) {$\textcolor{black}{m}(\textbf{g}(\textcolor{black}{x}))$};

\end{scope}

%

\end{tikzpicture}}
		\subcaption{\label{fig:genericintervention}}	
	\end{subfigure}\hspace*{.5cm}
	\begin{subfigure}{.52\textwidth}
		\centering
		\scalebox{.5}{\definecolor{cc47800}{RGB}{196,120,0}
\definecolor{cc47801}{RGB}{70,70,70}
\definecolor{c788a21}{RGB}{120,138,33}
\definecolor{c000001}{RGB}{0,0,1}
\definecolor{c007c5f}{RGB}{0,124,95}

\begin{tikzpicture}[y=0.80pt, x=0.80pt, yscale=-1.000000, xscale=1.000000, inner sep=0pt, outer sep=0pt]
\begin{scope}
\path[draw=cc47801,opacity=0.270,miter limit=10.00,line
    width=4.825pt,rounded corners=0.0000cm] (361.4134,218.9056) rectangle
    (535.1547,316.0917);

\node (A) at (256.5694,266.5391) {};
\node (B) at (359.5711,266.5391) {};
\draw[-Triangle,line width=4pt] (A) edge (B);
\node (C) at (535.5694,266.5391) {};
\node (D) at (643.5711,266.5391) {};
\draw[-Triangle,line width=4pt] (C) edge (D);
\node[align=center,scale=4](input) at (5.9cm,7.6cm) {$x$};

\node[align=center,scale=4](mech) at (12.7cm,7.6cm) {$m$};
\node[align=left,scale=4] (output) at (21.5cm,7.6cm) {$y$};

\end{scope}
 \begin{scope}[shift={(0,-200)}]
 \path[draw=cc47801,opacity=0.270,miter limit=10.00,line
     width=4.825pt,rounded corners=0.0000cm] (361.4134,218.9056) rectangle
     (535.1547,316.0917);

 \node (A) at (256.5694,266.5391) {};
 \node (B) at (359.5711,266.5391) {};
 \draw[-Triangle,line width=4pt] (A) edge (B);
 \node (C) at (535.5694,266.5391) {};
 \node (D) at (643.5711,266.5391) {};
 \draw[-Triangle,line width=4pt] (C) edge (D);
\node[align=center,scale=2.6](inputOrg) at (5.9cm,7.6cm) {Cause};

\node[align=center,scale=2.6](mechOrg) at (12.7cm,7.6cm) {Mechanism};
\node[align=center,scale=2.6] (outputOrg) at (20.5cm,7.6cm) {Effect};

\end{scope}
 \draw[->,line width=2pt] ($(inputOrg)+(0cm,1cm)$) --($(input)+(0cm,-1cm)$);
 \draw[->,line width=2pt] ($(outputOrg)+(0cm,1cm)$) -- ($(output)+(-1cm,-1cm)$);
\node[align=center,scale=4](input) at (6.9cm,4.6cm) {$A$};

\node[align=center,scale=4] (output) at (21.5cm,4.6cm) {$A'$};

\end{tikzpicture}}
		\subcaption{\label{fig:attribute}}
	\end{subfigure}	
	\caption{(a) Principle of the group theoretic framework: a generic transformation is introduced between the cause and the mechanism. (b) Introduction of the concept of attribute to describe a structural causal model.}
	\vskip -.5cm
\end{figure*}
\subsection{Formal definition}
We state our framework first in full generality: we assume plausible causes and effects do not need to belong to the same sets or to be the same type of object. Importantly, the causes and effects may be deterministic or random. In addition, we do not assume invertibility of the cause effect relationship. To allow a quantitative analysis of the system, it is necessary to characterize SCM variables by a mathematical object that we call their attribute. The covariance matrix is an example of attribute for a multivariate random variable, and typically the attribute of a random variable will be a function of its probability distribution. More generally, any function of a cause or effect can potentially be considered as an attribute. Speaking formally, given an effect generated by a cause through the mechanism as described in \cref{fig:attribute}, we measure attributes of cause and effect using functions $A$ and $A'$ with codomain $\A$ and $\A'$ respectively. To allow a less formal presentation, we will abusively consider the mechanism $m$ as acting directly on the attribute space $\A$, and $x$ and $y$ will indicate indistinctly the cause and effect or their attribute.

Applying the ICM framework requires assessing genericity of the relationship between input and mechanism quantitatively. For that we define two objects:
\begin{itemize}
	\item the \textit{generic group} $\G$ is a compact topological group that acts on $\A$, thus equipped with a unique Haar probability measure $\mu_\G$ (see appendix for group theoretic definitions),
	\item the \textit{Cause-Mechanism Contrast}\footnote{we use the term contrast in reference to Independent Component Analysis, where contrast functions are also used to measure independence} (CMC) $C$ is a real valued function\footnote{in this short paper we do not elaborate on the requirements regarding measurability of the functions to keep the presentation readable} with domain $\A'$.
\end{itemize}

The CMC and generic group introduced in such a way allow to compute the expected value when randomly "breaking" the cause-mechanism relationship using generic transformations.
\begin{defn}
Given a CMC $C$, the \textit{Expected Generic Contrast} (EGC) of a cause mechanism pair $(x,m)$ is defined as:
\begin{equation}\label{eq:EGC}
\textstyle{\langle C\rangle_{m,x}=\mathbb{E}_{g\sim \mu_\G}C(mgx) = \int_{\G}C(mgx)\,d\mu_\G(g)}\,.
\end{equation}
We say that the relation between $m$ and $x$ is $\G$-generic under $C$ (or the pair $(x,m)$ is $\G$-generic), whenever
\begin{equation}\label{eq:genericity}
\textstyle{C(mx)=\langle C\rangle_{m,x}}
\end{equation}
holds approximately.
\end{defn}
We call ~(\ref{eq:genericity}) the \textit{genericity equation}. Note that this equation is used to express an idealized ICM postulate (hence the term ``holds approximately") that is not meant to be satisfied exactly in practice but justified by assuming that in appropriate contexts, the generic distribution concentrates around its mean (see \citep{icml2010_062} for an example). Such concentration of measure results are hard to obtain in general, and we will leave this to further work to focus on the properties of the genericity equation in this paper. Genericity can be formulated more rigorously as a statistical test assessing whether $C(mx)$ is likely to be sampled from the null hypothesis whose distribution is generated by $C(mgx),\,g\sim\mu_\G$ \citep{frW_UAI}. 

\subsection{Invariant generative models}\label{sec:invagen}
There is an interesting probabilistic interpretation of the concept of genericity. If we are given a generative model such that the cause $x$ is a single sample drawn from a distribution $\mathcal{P}_\mathcal{X}$ (see \cref{fig:genmodsamp}).  To estimate genericity irrespective of the possible values of $x$ we consider the \textit{generic ratio} $C(mx)/\langle C\rangle_{m,x}$: this quantity should be close to one with a large probability in order to satisfy ICM assumptions and in several cases (as in the example of  \cref{sec:gtview}) leads to a scale invariant measure of genericity. Assume $\mathcal{P}_\mathcal{X}$ is a $\G$-invariant distribution, under mild assumptions \citep{wijsman1967cross} it can be parametrized as $x=g\tilde{x}$ for $\tilde{x}$ and $g$ independent RVs and $g\sim\mu_{\G}$. Then
\begin{equation}\label{eq:ratio}
\mathbb{E}_x \!\!\left[\frac{C(mx)}{\langle C\rangle_{m,x}}\right]\!=\!\mathbb{E}_{\tilde{x}}\mathbb{E}_{\tilde{g}}\!\! \left[\frac{C(mg\tilde{x})}{\langle C\rangle_{m,g\tilde{x}}}\right]\!=\!1
\end{equation} 
This tells us that the postulate of genericity is true at least ``on average" for the generative model. On the contrary, if this average would be different from 1 as it may happen for non-invariant distributions, the method is unlikely to succeed. As represented on \cref{fig:genmodsamp}, the same reasoning can be applied when sampling the mechanism from an invariant distribution. Such reasoning can be generalized to more complex causal structures involving multiple mechanisms and variables. Note also that some variables may be unobserved (latent), as this is the case for many generative models in machine learning (see \cref{fig:latent}). This case will be investigated in \cref{sec:groupnmf}. 
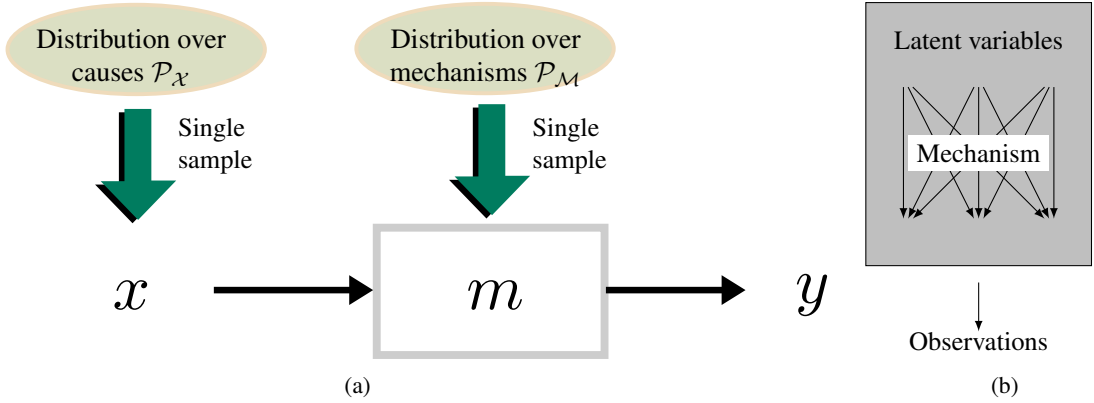
\begin{figure*}
\begin{subfigure}{.6\textwidth}
\scalebox{.62}{\definecolor{cc47800}{RGB}{196,120,0}
\definecolor{cc47801}{RGB}{70,70,70}
\definecolor{c788a21}{RGB}{120,138,33}
\definecolor{c000001}{RGB}{0,0,1}
\definecolor{c007c5f}{RGB}{0,124,95}

\begin{tikzpicture}[y=0.80pt, x=0.80pt, yscale=-1.000000, xscale=1.000000, inner sep=0pt, outer sep=0pt]

  \path[draw=cc47801,opacity=0.270,miter limit=10.00,line
    width=4.825pt,rounded corners=0.0000cm] (361.4134,218.9056) rectangle
    (535.1547,316.0917);

\path[shift={(-28.71698,-32.01401)},draw=cc47800,fill=c788a21,opacity=0.270,miter
    limit=10.00,line width=2.400pt]
    (304.8291,114.9084)arc(-0.008:180.008:98.129906 and
    34.102)arc(-180.008:0.008:98.129906 and 34.102) -- cycle;

\path[shift={(237.32127,-32.70997)},draw=cc47800,fill=c788a21,opacity=0.270,miter
    limit=10.00,line width=2.400pt]
    (304.8291,114.9084)arc(-0.008:180.008:98.129906 and
    34.102)arc(-180.008:0.008:98.129906 and 34.102) -- cycle;

\begin{scope}[cm={{0.0,0.51476,-0.49753,0.0,(844.34353,-119.62392)}}]
    \begin{scope}[cm={{0.61811,0.0,0.0,0.61811,(459.80646,392.0967)}},fill=c000001]
      \path[fill=c000001] (31.9662,631.1048) -- (203.6921,631.1048) --
        (203.6921,569.4855) -- (298.1414,663.9348) -- (205.2683,756.8079) --
        (205.2683,697.7749) -- (31.9459,697.7749) -- (31.9662,631.1048) -- cycle;
    \end{scope}
    \begin{scope}[cm={{0.61811,0.0,0.0,0.61811,(454.80645,383.52527)}},fill=c007c5f]
      \path[fill=c007c5f] (31.9662,631.1048) -- (203.6921,631.1048) --
        (203.6921,569.4855) -- (298.1414,663.9348) -- (205.2683,756.8079) --
        (205.2683,697.7749) -- (31.9459,697.7749) -- (31.9662,631.1048) -- cycle;
    \end{scope}
  \end{scope}

\begin{scope}[shift={(20,0)},cm={{0.0,0.51476,-0.49753,0.0,(554.43134,-116.00195)}}]
    \begin{scope}[cm={{0.61811,0.0,0.0,0.61811,(459.80646,392.0967)}},fill=c000001]
      \path[fill=c000001] (31.9662,631.1048) -- (203.6921,631.1048) --
        (203.6921,569.4855) -- (298.1414,663.9348) -- (205.2683,756.8079) --
        (205.2683,697.7749) -- (31.9459,697.7749) -- (31.9662,631.1048) -- cycle;
    \end{scope}
    \begin{scope}[cm={{0.61811,0.0,0.0,0.61811,(454.80645,383.52527)}},fill=c007c5f]
      \path[fill=c007c5f] (31.9662,631.1048) -- (203.6921,631.1048) --
        (203.6921,569.4855) -- (298.1414,663.9348) -- (205.2683,756.8079) --
        (205.2683,697.7749) -- (31.9459,697.7749) -- (31.9662,631.1048) -- cycle;
    \end{scope}
  \end{scope}

\node (A) at (236.5694,266.5391) {};
\node (B) at (359.5711,266.5391) {};
\draw[-Triangle,line width=4pt] (A) edge (B);
\node (C) at (535.5694,266.5391) {};
\node (D) at (643.5711,266.5391) {};
\draw[-Triangle,line width=4pt] (C) edge (D);
\node[align=center,scale=1.7](title1) at (4.9cm,2.5cm) { Distribution over \\ causes $\mathcal{P_X}$};

\node[align=center,scale=1.7](title2) at (12.5cm,2.5cm) {Distribution over \\ mechanisms $\mathcal{P_M}$};

\node[align=center,scale=4](input) at (4.9cm,7.6cm) {$x$};
\node[align=left,scale=1.6](sampling1) at (6.7cm,4.4cm) {Single \\sample};

\node[align=left,scale=1.6](sampling2) at (14.3cm,4.4cm) {Single \\sample};

\node[align=center,scale=4](mech) at (12.7cm,7.6cm) {$m$};
\node[align=center,scale=4] (output) at (19.5cm,7.6cm) {$y$};

\end{tikzpicture}}
\subcaption{\label{fig:genmodsamp}}
\end{subfigure}
\hspace*{1cm}
\begin{subfigure}{.3\textwidth}
	\centering
	\scalebox{1}{\begin{tikzpicture}

\node at (3,0.5) {Observations};
\draw[fill=lightgray]  (1.5,1.5) rectangle (4.5,5);
\node at (3,4.5) {Latent variables};
\node (v1) at (2,4) {};
\node (v3) at (3,4) {};
\node (v5) at (4,4) {};
\node (v6) at (4,2) {};
\node (v4) at (3,2) {};
\node (v2) at (2,2) {};
\draw[-latex]  (v1) edge (v2);
\draw[-latex]  (v3) edge (v4);
\draw[-latex]  (v5) edge (v6);
\draw[-latex]  (v5) edge (v4);
\draw[-latex]  (v5) edge (v2);
\draw[-latex]  (v1) edge (v6);
\draw[-latex]  (v1) edge (v4);
\draw[-latex]  (v3) edge (v2);
\draw[-latex]  (v3) edge (v6);
\node (v7) at (3,0.5) {};
\node (v8) at (3,1.4) {};
\draw[-latex]  (v8) edge (v7);
\node[fill=white] at (3,3) {Mechanism};
\node at (5,3) {};
\end{tikzpicture}}
	\subcaption{\label{fig:latent}}
	\end{subfigure}
\caption{(a) Generative model including distributions over causes and mechanisms. (b) Causal structure of a latent generative model.}
\vskip -.5cm
\end{figure*}

\begin{commenth}
Let us now mention some interpretations about \domi{this sentence with interpretations still sounds strange} the generative model using this kind of normalized contrast (without loss of generality). Independence of cause and mechanism can be viewed as relying on the  assumption of a generative model from which the observed causes and/or mechanisms are sampled. If we assume either the cause or mechanism distribution are $\G$-invariant, it may be possible (relying on additional assumptions: see \citep{Eaton1989,wijsman1967cross,farrell2012multivariate,barndorff2012decomposition}) to represent the  $\G$-invariant random variable as a product of two independent variables: a cross-section variable $Y$ and a Haar-distributed random element from $\G$. Under these hypotheses, the EGC of $\G$-separable CMC appears naturally in the population mean. 
\domi{the reader gets no idea about what these assumptions are. Moreover, the sentence inspires the
reviewers to ask to what extent these references already contain the essential idea...}

\begin{prop}\label{prop:invar_gener}
Assume the mechanism $m$ is fixed and the cause attribute $X$ is randomly drawn from a  $\G$-invariant generative model and meet the conditions to be represented as $X=gY$ where $g\sim \mu_\G$ and $Y\sim p_Y$ are independent random variables. 
\domi{This assumption sounds strange and cumbersome, why not simply saying:"...and assume that $X=gY$ where  $g$ is randomly drawn from the Haar measure...and $Y$ is fixed"? What's the point with randomizing $Y$ at all?}
Then for any multiplicatively separable CMC
$$
\mathbb{E}_{X|Y=y_0} \frac{C(mX)}{\langle C\rangle_{m,X}}=1.
$$  
The same results holds trivially by replacing conditional expectations by full expectations. 
\end{prop}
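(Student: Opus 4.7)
The plan is to use the bi-invariance of the Haar measure $\mu_\G$ on the compact group $\G$ to show that the EGC $\langle C\rangle_{m,x}$ is constant along the $\G$-orbit of $x$, after which the identity becomes immediate. Conditioning on $Y=y_0$, the assumption $X=gY$ with $g\sim\mu_\G$ independent of $Y$ turns the left-hand expectation into $\mathbb{E}_{g\sim\mu_\G}\!\left[C(mgy_0)/\langle C\rangle_{m,gy_0}\right]$, so everything reduces to analyzing this quantity.

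The key step is showing that the denominator does not depend on $g$. By definition of the EGC and the substitution $h'=hg$,
\[
\langle C\rangle_{m,gy_0} \;=\; \int_\G C(m h g y_0)\,d\mu_\G(h) \;=\; \int_\G C(m h' y_0)\,d\mu_\G(h') \;=\; \langle C\rangle_{m,y_0},
\]
where the second equality uses right-invariance of $\mu_\G$ (valid since $\G$ is compact and hence unimodular, so the unique Haar probability measure is bi-invariant). With the denominator now constant in $g$, I pull it out of the expectation and finish:
\[
\mathbb{E}_{g\sim\mu_\G} \frac{C(mgy_0)}{\langle C\rangle_{m,y_0}} \;=\; \frac{1}{\langle C\rangle_{m,y_0}} \int_\G C(mgy_0)\,d\mu_\G(g) \;=\; \frac{\langle C\rangle_{m,y_0}}{\langle C\rangle_{m,y_0}} \;=\; 1.
\]
The unconditional statement follows at once by the tower property: $\mathbb{E}_X[\,\cdot\,]=\mathbb{E}_Y\mathbb{E}_{X\mid Y}[\,\cdot\,]=\mathbb{E}_Y[1]=1$.

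I do not anticipate any serious obstacle. Two small observations are worth recording: first, the argument uses only bi-invariance of $\mu_\G$ together with the defining form of the EGC, and never invokes multiplicative $\G$-separability of $C$; so the identity actually holds for \emph{any} CMC (subject to integrability), and separability is a convenient but non-essential hypothesis for this particular result. Second, the only technical points deserving a line of justification are (i) integrability of $h\mapsto C(mhy_0)$ with respect to $\mu_\G$ so that Fubini applies to the substitution above, and (ii) the appeal to \citep{wijsman1967cross} to license the factorization $X=gY$ with $g\perp Y$ under the stated $\G$-invariance of the cause distribution.
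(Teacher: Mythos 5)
Your proof is correct, but it takes a genuinely different route from the paper's. The paper's own argument assumes the contrast is normalized (without loss of generality), expands the denominator via multiplicative separability as $\langle C\rangle_{m,gy_0}=C(mu)\,C(gy_0)$ with $u$ the invariant reference attribute, uses $\G$-invariance of the contrast to replace $C(gy_0)$ by $C(y_0)$, and then identifies the remaining numerator $\mathbb{E}_{g\sim\mu_\G}C(mgy_0)$ with $C(mu)C(y_0)$ — so separability (and the contrast's invariance) is invoked explicitly. You instead show directly that the denominator is constant along the $\G$-orbit, $\langle C\rangle_{m,gy_0}=\langle C\rangle_{m,y_0}$, through the substitution $h\mapsto hg$ and bi-invariance of the Haar probability measure on the compact (hence unimodular) group, after which the ratio integrates to one. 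Your side remark is accurate: this argument never uses multiplicative separability, so the identity holds for any integrable CMC; indeed it is essentially the reasoning the paper itself gives for the more general statement \eqref{eq:ratio} (``the denominator \dots is a constant'' given $\tilde{X}$), for which no separability is assumed. What the separability-based proof buys is the explicit factorized form $\langle C\rangle_{m,y_0}=C(mu)C(y_0)$ of the EGC, which the paper exploits elsewhere (e.g., in the forward--backward inequality), but it is not needed for this identity. Your route also isolates the only real technical requirements — integrability of $g\mapsto C(mgy_0)$ and non-vanishing of the EGC so the ratio is well defined (a caveat equally present in the paper's version) — together with the cited cross-section representation $X=gY$ with $g\sim\mu_\G$ independent of $Y$.
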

\begin{proof}
We assume the contrast is normalized (without loss of generality). Then for the first case
$$
\mathbb{E}_{X|Y=y_0} \frac{C(mX)}{\langle C\rangle_{m,X}}=
\mathbb{E}_{g\sim \mu_\G} \frac{C(mgy_0)}{C(mu)C(gy_0)}= \frac{\mathbb{E}_{g\sim \mu_\G}C(mgy_0)}{C(mu)C(y_0)}.
$$  
\end{proof}
A similar result is again true whenever the mechanism is $\G$-invariantly distributed (possibly together with an independent $\G$-invariant distribution of the cause).
These results show that the \emph{genericity ratio} between the contrast and the EGC  is 1 on average, if we sample from many systems distributed according to an $\G$-invariant generic model based on the ICM postulate. Whenever for the model at hand a concentration result guaranties that this ratio is close to its population mean with high probability for one single observation, we can use the closeness of this ratio to 1 to test whether the ICM postulate holds for the causal model and choose the plausible causal direction accordingly. Most ICM-based techniques follow this principle, however note that if the contrast turns to be additively separable, the ratio turns into a difference. \domi{If we have introduced the hypothesis test earlier, we can now refer to it}

\subsection{Identifiability}\label{sec:indent}

Identifiability is a key issue for ICM based causal discovery. While answering this question is as far as we know dependent on the particular CMU at hand, we provide here a few definitions and results that hold for a general class of CMUs.
First, is is intuitive that if a mechanism turns an invariant attribute from $\I$ into another, it will be impossible to identify the causal direction with a CMC. As a consequence, the subset of  mechanisms that leave this set unchanged is named \textit{trivial}. 
\begin{defn}
For a given CMU $\Upsilon=(A,\M,\G)$, the subgroup of mechanisms that leave $\I$ unchanged is call the  \textit{trivial} subgroup
\[\T=\{m\in \M\colon m\mathcal{I}=\mathcal{I}\}.\]
\end{defn}
 Intuitively speaking $\T$ consists of all elements in $\M$ which transform one invariant attribute to another one.  Causal inference will be blind to such mechanism, hence the name trivial (see~\ref{sec:indent}). This intuition is made precise in the following proposition.\michel{these intuitions are not intuitive for most causality people I would say, let's find a better way to explain this}

\begin{prop}
For any CMU $\Upsilon=(A,\M,\G)$, we have
\[\T = \{m\in \M \colon m\G =\G m\}.\]
\end{prop}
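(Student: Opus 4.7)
The plan is to establish both inclusions, which together express a Galois-like correspondence between the subgroup of $\M$ that stabilizes $\I$ as a set and the normalizer of $\G$ in $\M$.

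For the inclusion $\{m\in\M : m\G=\G m\}\subseteq \T$, I would fix $m$ with $m\G=\G m$ and pick any $x\in\I$. For any $g\in\G$ the normalizer property provides $g'\in\G$ with $gm=mg'$, so
\[
g(mx)=(gm)x=(mg')x=m(g'x)=mx,
\]
using $g'x=x$ since $x\in\I$. Thus $mx\in\I$ and $m\I\subseteq\I$. Applying the same reasoning to $m^{-1}$ (which also satisfies $m^{-1}\G=\G m^{-1}$) yields $m^{-1}\I\subseteq \I$, so $\I=m(m^{-1}\I)\subseteq m\I$, and therefore $m\I=\I$, i.e.\ $m\in\T$.

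For the reverse inclusion $\T\subseteq\{m\in\M : m\G=\G m\}$, I would fix $m\in\T$ and $g\in\G$, and show $mgm^{-1}\in\G$ by proving it fixes every element of $\I$ pointwise: for any $x\in\I$, since $m^{-1}\I=\I$ we have $m^{-1}x\in\I$, hence $g(m^{-1}x)=m^{-1}x$ and consequently $(mgm^{-1})x=x$. Varying $g$ over $\G$ then gives $m\G m^{-1}\subseteq\G$, and the symmetric argument applied to $m^{-1}$ yields the opposite inclusion, so $m\G m^{-1}=\G$, i.e.\ $m\G=\G m$.

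The main obstacle in this plan is the last step of the second direction: concluding from ``$mgm^{-1}$ fixes $\I$ pointwise'' that $mgm^{-1}$ actually belongs to $\G$. This is a closure/maximality property connecting $\G$ with $\I$ which is natural in the CMU setup (where $\I$ is precisely the set of $\G$-fixed attributes and $\G$ coincides with the pointwise stabilizer of $\I$ inside $\M$), but needs to be invoked explicitly. If this Galois-type compatibility is built into the framework then the argument goes through verbatim; otherwise the right-hand side of the proposition should be interpreted up to this identification, or one should restrict $\M$ accordingly.
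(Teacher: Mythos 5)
Your first inclusion, $\{m\in\M : m\G=\G m\}\subseteq\T$, is sound: the computation $g(mx)=m(g'x)=mx$ gives $m\I\subseteq\I$, and passing to $m^{-1}$ upgrades this to $m\I=\I$. Note that this step genuinely needs mechanisms to be invertible (a constant map onto a point of $\I$ satisfies $m\G=\G m$ but not $m\I=\I$ when $\I$ has more than one element); this is consistent with the surrounding section, which freely uses $\inv{m}$, but it is worth stating. The real problem is the second inclusion, and the obstacle you flag at the end is not a formality but a genuine gap: from ``$mgm^{-1}$ fixes $\I$ pointwise'' one cannot conclude $mgm^{-1}\in\G$ from the definitions actually given. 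In the framework, $\G$ is merely \emph{some} compact group acting on $\A$, $\I$ is its fixed-point set, and $\T=\{m : m\I=\I\}$; nothing forces $\G$ to be the \emph{full} pointwise stabilizer of $\I$ within $\M$. Without that closure (or an explicit maximality assumption on $\G$), the stated equality can simply fail: take $\A=\mathbb{R}^2$, $\G=SO(2)$ acting by rotations, and $\M=GL(2)$ acting linearly. Then $\I=\{0\}$, so $\T=\M$, whereas $\{m : m\G=\G m\}$ is only the conformal group of scalar multiples of orthogonal matrices; e.g.\ $\mathrm{diag}(1,2)$ lies in $\T$ but does not normalize $SO(2)$. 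So the direction $\T\subseteq\{m:m\G=\G m\}$ cannot be completed as written, and the proposition itself requires an additional hypothesis of exactly the Galois/maximality type you describe (that $\G$ coincides with the pointwise stabilizer of $\I$ in $\M$).

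For comparison with the source: the paper states this proposition without any proof, and the subsequent blind-set argument even invokes $m\G=\G m$ as if it were ``the very definition of $\T$'', so there is no hidden trick in the paper that you failed to reproduce. Your reconstruction of the easy direction matches what the definitions support, and your diagnosis of the missing ingredient in the hard direction is exactly right; but as a proof of the statement as written it is incomplete, and should be presented as conditional on the added stabilizer/maximality postulate rather than as a consequence of the stated CMU axioms.
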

Elements of $\T$ are to some extent not affected by randomization with elements of the generic group, as right composition by a generic element amounts to left composition by another element, and this will affect the identifiability of the causal direction (see section~\ref{sec:indent}).

\begin{defn}
For a given CMU $\Upsilon$ and CMC $C$, the blind set is the set of models such that cause and mechanisms are $\G$-independent for both the forward and backward direction  
$$
\mathcal{B}(\Upsilon,C)=\{(x,m)\in\X\times\M,C(mx)=\langle C\rangle_{m,x}\text{ and } C(x)=\langle C\rangle_{\inv{m},mx}\}
$$
\end{defn}
The following properties are a direct consequence of this definition.
\begin{prop}
For a given CMU $\Upsilon$ and any CMC $C$, trivial mechanisms cannot be identified 
$$
\X\times\mathcal{T} \subset \mathcal{B}(\Upsilon,C)
$$
\end{prop}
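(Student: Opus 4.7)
The plan is to verify directly both equalities defining $\mathcal{B}(\Upsilon,C)$ for every pair $(x,m)\in\X\times\T$. The key algebraic input is the preceding characterization $\T = \{m\in\M : m\G = \G m\}$: for each $g\in\G$ there is a unique $g'\in\G$ with $mg = g'm$, and the map $\phi_m: g\mapsto g'$ is conjugation by $m$ restricted to $\G$. Since $\phi_m$ is a continuous automorphism of the compact group $\G$, and the Haar measure is unique up to normalization, $\phi_m$ preserves $\mu_\G$.

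For the first identity $C(mx)=\langle C\rangle_{m,x}$, I would expand the EGC and substitute $mg=\phi_m(g)\,m$ inside the integrand:
\[
\langle C\rangle_{m,x}
= \int_\G C(mgx)\,d\mu_\G(g)
= \int_\G C\bigl(\phi_m(g)\,mx\bigr)\,d\mu_\G(g).
\]
Invoking the $\G$-invariance of the CMC (the natural condition motivated in the preceding background discussion), the integrand collapses to the constant $C(mx)$, and integrating against the probability measure $\mu_\G$ yields $C(mx)$. For the second identity $C(x)=\langle C\rangle_{\inv{m},mx}$, I would first observe that $\inv{m}\in\T$: multiplying $m\G=\G m$ on the left and right by $\inv{m}$ gives $\G\inv{m}=\inv{m}\G$. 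The same argument then applies: there exists $g''\in\G$ with $\inv{m}g=g''\inv{m}$, so $\inv{m}g(mx)=g''x$, and $\G$-invariance of $C$ gives $C(\inv{m}g(mx))=C(x)$; integration recovers $C(x)$.

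The only real subtlety is the implicit reliance on $\G$-invariance of the contrast. The bare CMC definition in the excerpt does not demand it, but without it the proposition fails: the residual $g'$-dependence in $C(g'mx)$ would not vanish under conjugation by a trivial mechanism. Once this invariance is accepted as the natural setting, the proof is a two-line verification in each direction, and the containment $\X\times\T\subset\mathcal{B}(\Upsilon,C)$ follows immediately from the definition of the blind set.
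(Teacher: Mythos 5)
Your proof is correct and follows essentially the same route as the paper's: rewrite $mg$ as $g'm$ using $m\G=\G m$, invoke $\G$-invariance of the contrast to collapse the integrand to a constant, and repeat with $\inv{m}\in\T$ for the backward condition. The extra remark about $\phi_m$ preserving the Haar measure is unnecessary (invariance of $C$ already makes the integrand constant), and your observation that the argument hinges on $\G$-invariance of the CMC matches the paper, which likewise invokes that property explicitly in its proof.
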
 
\begin{proof}
Let $(x,m)\in \X\times \T$. The proof follows by unraveling the definitions:
\[\langle C\rangle_{m,x} = 
\int_G C(mgx)\, d\mu_\G(g) \stackrel{\tiny(\star)}{=} 
\int_G C(g'mx)\, d\mu_\G(g) = 
C(mx)\]
for some $g'\in \G$. For $(\star)$ we used the very definition of $\T$ ($m\G = \G m$ for all $m\in \M$). Since contrast $C$ is $\G$-invariant the expression in the integral does not depend on $g$ and the last equality follows. Similarly
\[\langle C\rangle_{m^{-1},mx} = 
\int_\G C(m^{-1} g mx)\,d\mu_\G(g) \stackrel{\tiny(\star)}{=}
\int_\G C(g'' m^{-1} mx)\,d\mu_\G(g) = 
\int_\G C(x)\, d\mu_\G(g) = C(x).
\]
\end{proof}
An intuitive case of this result is that generic mechanisms cannot be identified.
The trivial subgroup thus determines a minimal set of mechanisms that are not identifiable, whatever the contrast is chosen. As a consequence, the particular attention should be paid to the choice of the the mechanism and generic subgroups in order not to restrict the set of identifiable mechanisms.

\subsection{Concentration of measure in CMUs}
A important element for identifiability in ICM-based models is to have a concentration of measure property, such that, typically for large dimensions of the input space, an appropriate generative model results in typical CMC values that concentrate around a population mean. Here  we provide a general formalism for that in a group theoretic setting and discuss implications for generative models under ICM.

\begin{defn}
	A CMC $C$ is said to be uniformly bounded if there exists $f:\mathbb{R}^+\rightarrow [0,\,1]$ such that for all $(a,m)\in(\A\times\M)$
	$$
	\operatorname{P}_{g\sim \mu_\G}\left(\left|\frac{C(m g a)}{\mathbb{E}_{g\sim \mu_\G}C(m g a)}-1\right|>\epsilon\right)\leq f(\epsilon).
	$$
\end{defn}
Note that the upper bound does not depend on $m$ and $a$, hence the term \textit{uniformly}. 
Assuming $f(\epsilon)$ stays reasonably small for small $\epsilon$ uniformly boundedness implies that the Randomized Contrast Distribution $\mathcal{D}_{x,m}$ defined above concentrates around its mean, the ERC, independently from the values of $m$ and $x$. 
This concentration property is very useful to prove that the ICM holds approximately for a generative model with group invariance properties:
\begin{thm}\label{thm:gener_invariance}
	Let $\Upsilon$ be a CMU with CMC function $C$ uniformly bounded by $f$. Assume the cause $X$ is drawn at random from a $\G$-invariant distribution, then 
	\[
	\operatorname{P}_{X\sim P_X}\left(\left|\frac{C(m A(X))}{\langle C\rangle_{m,x}}-1\right|>\epsilon\right)\leq f(\epsilon)
	\]
\end{thm}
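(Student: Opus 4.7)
The plan is to reduce the probability statement over $X$ to the uniform bound on randomization by $g\sim\mu_\G$, by exploiting the Wijsman-type decomposition of a $\G$-invariant random variable that the paper already invokes in the derivation of equation~\eqref{eq:ratio}. Concretely, under the standing regularity assumption that makes Wijsman's representation available, write $X = g\tilde X$ with $g\sim \mu_\G$ and $\tilde X$ independent of $g$ (and distributed as a cross-section of $P_X$).

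The first step is to observe that the denominator of the ratio is invariant under this decomposition. Using the left-invariance of the Haar measure $\mu_\G$,
\[
\langle C\rangle_{m,X} \;=\; \int_\G C(mg'X)\,d\mu_\G(g') \;=\; \int_\G C(mg'g\tilde X)\,d\mu_\G(g') \;=\; \int_\G C(mg''\tilde X)\,d\mu_\G(g'') \;=\; \langle C\rangle_{m,\tilde X},
\]
so after the decomposition the denominator is a function of $\tilde X$ alone and carries no dependence on the random element $g$. The numerator, by contrast, becomes $C(m g\tilde X)$, which is exactly the quantity controlled by the uniform boundedness hypothesis.

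The second step is to condition on $\tilde X=\tilde x$ and apply the uniform bound. For every fixed $\tilde x$, the uniform boundedness of $C$ applied with attribute $a=\tilde x$ and mechanism $m$ yields
\[
\operatorname{P}_{g\sim\mu_\G}\!\left(\left|\frac{C(m g\tilde x)}{\langle C\rangle_{m,\tilde x}}-1\right|>\epsilon\right)\;\leq\; f(\epsilon),
\]
where we used $\mathbb{E}_{g'\sim\mu_\G} C(m g'\tilde x)=\langle C\rangle_{m,\tilde x}$. Since the denominator on the left equals the ratio we actually care about (by the first step), this is precisely the conditional version of the desired bound.

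The third step is to remove the conditioning by the tower property. Integrating over the independent law of $\tilde X$ preserves the inequality, giving $\operatorname{P}_{X\sim P_X}\bigl(|C(mA(X))/\langle C\rangle_{m,X}-1|>\epsilon\bigr)\leq f(\epsilon)$ as claimed. The only subtle point, and the main obstacle, is justifying the $X=g\tilde X$ factorization: it is not automatic for every $\G$-invariant $P_X$ but holds under the standard regularity conditions (compact $\G$, suitable measurability) cited via \citet{wijsman1967cross} earlier in the section; once that representation is in place, the proof is a one-line application of Fubini and the uniform-boundedness hypothesis.
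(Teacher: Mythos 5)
Your proof is correct, but it reaches the conclusion through more machinery than is needed, and it is worth comparing the two routes. The paper's own proof never invokes the Wijsman cross-section representation: it uses only the elementary fact that if $P_X$ is $\G$-invariant then $gX$ (with $g\sim\mu_\G$ independent of $X$) has the same law as $X$, so the probability of the event $\bigl\{\bigl|C(mA(X))/\langle C\rangle_{m,X}-1\bigr|>\epsilon\bigr\}$ equals the probability, under the product measure $P_X\otimes\mu_\G$, of the event $\bigl\{\bigl|C(mgA(X))/\mathbb{E}_{g'\sim\mu_\G}C(mg'A(X))-1\bigr|>\epsilon\bigr\}$; writing this as the integral of an indicator and applying Fubini, the inner integral over $g$ for each fixed $X$ is bounded by $f(\epsilon)$ by uniform boundedness applied with attribute $a=A(X)$, and integrating over $P_X$ gives the claim. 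Your argument has the same skeleton (make the Haar randomness explicit, apply the uniform bound conditionally, integrate), but you condition on the cross-section variable $\tilde X$ rather than on $X$ itself, and your observation that $\langle C\rangle_{m,g\tilde X}=\langle C\rangle_{m,\tilde X}$ by invariance of the Haar measure is exactly the denominator invariance that the paper's first equality also uses implicitly. What the cross-section buys is a transparent picture of the invariant law as ``Haar noise times cross-section,'' consistent with how \cref{eq:ratio} is obtained in \cref{sec:invagen}; what it costs is an extra hypothesis --- the existence of the factorization $X=g\tilde X$, which requires the regularity conditions of \citet{wijsman1967cross} --- that the theorem statement does not impose and that the conditioning-on-$X$ argument avoids entirely. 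So your proof is valid under that additional assumption, but you should note that the bound holds for every $\G$-invariant $P_X$ via the simpler Fubini argument.
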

\begin{proof}\marek{no idea what's happening here...}
	By $\G$-invariance of $P_X$ we have:
	\[
	\operatorname{P}_{X\sim P_X}\left(\left|\frac{C(m A(X))}{\langle C\rangle_{m,x}}-1\right|>\epsilon\right)= 
	\operatorname{P}_{X\sim P_X,g\sim\mu_\G}\left(\left|\frac{C(m g A(X))}{\mathbb{E}_{g\sim 
			\mu_\G}C(m g A(X))}-1\right|>\epsilon\right)
	\]
	such that
	\begin{multline*}
	\operatorname{P}_{X\sim P_X}\left(\left|\frac{C(m A(X))}{\langle C\rangle_{m,x}}-1\right|>\epsilon\right)=
	\int_{X\in \X,g\in\G}\mathbbm{1}_{\left|\frac{C(m g A(X))}{\mathbb{E}_{g\sim \mu_\G}C(m g A(X))}-1\right|>\epsilon}dP(X)d\mu_\G\\
	\leq\int_{X\in \X}f(\epsilon)dP_X\leq f(\epsilon)
	\end{multline*}
\end{proof}

From this theorem, we can see that provided the randomized contrast concentrates uniformly around its mean, then a generative model with a group invariant probability distribution for the cause will approximately satisfy the independence criterion. The choice of the generic transformation group $\G$ to consider is thus tightly related to the type of invariance we assume for the generative model.

\subsection{Separable CMC}
Another key element for the identifiability of ICM based method is the forward-backward inequality, which guaranties that if there is independence in the forward direction, this independence is violated for the backward model. Due to its definition, some special invariance structure is expected from the ERC.
\begin{prop}
	For any CMC $C$, we have the following invariance properties. 
	For all $m\in\M$ and $x\in\X$
	$$
	\forall g,k\in\G, \langle C \rangle_{m,x}=\langle C \rangle_{g*m*g^{-1},k.x}
	$$
\end{prop}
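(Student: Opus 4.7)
The plan is to prove the identity by a change of variables in the Haar integral defining the EGC, exploiting the fact that the Haar measure on a compact group $\G$ is both left- and right-invariant (equivalently, bi-invariant). The only other ingredient needed is the $\G$-invariance of the contrast $C$ (which, although not highlighted in the short definition of a CMC quoted above, is used implicitly in the earlier proofs of the paper, e.g.\ the derivation showing that $(x,m)\in\X\times\T$ lies in the blind set).

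Concretely, I would start from the definition
\[
\langle C\rangle_{g m g^{-1},\,k x}
\;=\;\int_{\G} C\!\left( (gmg^{-1})\,h\,(kx)\right)d\mu_\G(h),
\]
and perform the substitution $h = g\,h'\,k^{-1}$. Since $\mu_\G$ is bi-invariant on the compact group $\G$, we have $d\mu_\G(h) = d\mu_\G(h')$. The argument of $C$ then telescopes:
\[
(gmg^{-1})(gh'k^{-1})(kx) \;=\; g\, m\, h'\, x,
\]
so that
\[
\langle C\rangle_{g m g^{-1},\,k x}
\;=\;\int_{\G} C\!\left(g\, m\, h'\, x\right) d\mu_\G(h').
\]

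Next, I would invoke $\G$-invariance of the contrast $C$, i.e.\ $C(g y) = C(y)$ for every $g\in\G$ and $y\in\A'$, applied to $y = m h' x$. This absorbs the leading $g$ and yields
\[
\int_{\G} C(m\, h'\, x)\,d\mu_\G(h')
\;=\;\langle C\rangle_{m,x},
\]
which is precisely the claim. The symmetry in $g$ and $k$ is not really symmetric in the proof: the $k$ on the right is eaten by right-invariance of Haar, while the $g$-conjugation is eaten by left-invariance and by the $\G$-invariance of $C$.

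The only real obstacle is conceptual rather than technical: the proposition as stated presupposes that a CMC is $\G$-invariant, whereas the formal definition earlier in the paper states only that $C$ is a real-valued function on $\A'$. I would therefore preface the proof by making this assumption explicit (or by restating the proposition with ``for every $\G$-invariant CMC $C$''). Once this is in place, the calculation is two lines of Haar-measure bookkeeping and one application of invariance of $C$.
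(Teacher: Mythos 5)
Your proof is correct: the bi-invariant change of variables $h = g\,h'\,k^{-1}$ in the Haar integral, followed by $\G$-invariance of $C$ to absorb the leftover outer $g$, is exactly the argument this framework calls for, and it matches how the paper handles the analogous computations (its other proofs likewise reduce to Haar-invariance substitutions and explicitly use that the contrast is $\G$-invariant). The paper states this proposition without writing out a proof, and your caveat that $\G$-invariance of $C$ must be made explicit in (or read into) the CMC definition is well taken, since the paper's own proof that trivial mechanisms lie in the blind set invokes precisely that property.
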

\michel{Note that here we exploit the conjugacy stability. In case we enforce the stronger condition that $\M$ is stable by left and right $\G$ multiplication, the invariance would take a more general form, without $g^{-1}$. Moreover, for such results, check "invariance theory", and "reynolds operator"}

This invariance structure can lead, at least for some special cases of CMC, to a simple form of the ERC where the dependency on $m$ and $x$ can be separated. We show here sufficient properties of two types of such "separable" CMC that enable to derive such an inequality.
\begin{defn}
	A CMC is said to be multiplicatively $\G$-separable if it exists a function $K$ such that for all $m\in\M$ and $x\in\X$
	$$
	\langle C\rangle_{m,x}=K(m)C(A(x))
	$$
\end{defn}
Note that from the definition and group invariance of the CMC, we get immediately that $K(e)=1$. Moreover, using basic properties of invariant attributes and of the Haar measure, it is easy to derive the following elementary property.
\begin{prop}\label{prop:K-multiplicative}
	For a mutiplicatively separable CMC of a CMU with reference attribute $u$, we have the following properties of $K$ for all $m\in\M$
	$$
	K(m)=C(mu)/C(u),
	$$
\end{prop}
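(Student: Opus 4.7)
The plan is to evaluate the separability identity at the reference invariant attribute $u \in \I$ and exploit the fact that $u$ is fixed by every element of the generic group. Concretely, I would first recall that $u$ being an invariant attribute means $gu = u$ for every $g \in \G$, so the integrand in the EGC degenerates.

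First I would substitute $x = u$ into the multiplicative separability relation
\[
\langle C \rangle_{m,x} = K(m)\, C(A(x))
\]
to obtain $\langle C \rangle_{m,u} = K(m)\, C(u)$ (treating $u$ as its own attribute under the abusive convention adopted in the paper). Second, I would compute the left-hand side directly from the definition of the EGC:
\[
\langle C \rangle_{m,u} = \int_\G C(mgu)\, d\mu_\G(g) = \int_\G C(mu)\, d\mu_\G(g) = C(mu),
\]
where the middle equality uses $gu = u$ for all $g \in \G$ and the last uses that $\mu_\G$ is a probability measure. Equating the two expressions yields $K(m)\,C(u) = C(mu)$, and dividing by $C(u)$ (tacitly assumed nonzero, as is standard when normalizing by a reference contrast value) gives the claim.

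The only subtle point, and the step I expect to be the main obstacle on a careful reading, is justifying that the chosen reference $u$ genuinely belongs to the invariant set $\I$, so that the step $C(mgu) = C(mu)$ is legitimate for every $g$. Once that is granted, the argument is a one-line calculation. I would also briefly flag that $K$ is then unique on the support where $C(u) \neq 0$, which explains why the definition of multiplicative separability does not overconstrain $C$.
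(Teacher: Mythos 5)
Your proof is correct and follows essentially the same route as the paper's: the paper likewise obtains $K(m)=C(mu)/C(u)$ by applying the separability definition to the case $A(x)=u$ with $u\in\I$, using the invariance of $u$ under every $g\in\G$ together with the fact that $\mu_\G$ is a probability measure so that $\langle C\rangle_{m,u}=C(mu)$. Your added remarks on $C(u)\neq 0$ and the uniqueness of $K$ are sensible clarifications but do not change the argument.
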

leading to the equation
\begin{equation}\label{prop:K-multiplicative}
\langle C\rangle_{m,x}=\frac{C(mu)}{C(u)}C(A(x))
\end{equation}
We can thus get a normalized contrast $C'$ by dividing by $C(u)$, for which we get 
\begin{equation}\label{prop:K-multiplicative-norm}
\langle C'\rangle_{m,x}=C'(mu)C'(A(x))
\end{equation}
We observed that according to proposition~\ref{prop:tracemultsep}, the trace is paradigmatic of a multiplicatively separable contrast, and the normalized trace $\tau_n$ is its corresponding normalized contrast. Interesting interpretations can be drawn from equation~\ref{prop:K-multiplicative-norm}: for a fixed invariant distribution $u$ we can define a map from mechanisms to attributes $e_u: m\mapsto  mu$ simply by looking at how mechanisms transform the $\G$-invariant attribute $u$. Using this mapping, the contrast under the genericity between mechanism $m$ and attribute $a$ factorizes exactly like an expectation under independence assumption
$$
C'(ma)=C'(\aleph_u(m))C'(a).
$$
Let us now draw some interpretations about the generative model using such normalized contrast (without loss of generality). Independence of cause and mechanism can be viewed as relying on the  assumption of a generative model from which the observed causes and/or mechanisms are sampled. If we assume either the cause or mechanism distribution are $\G$-invariant, it may be possible (relying on additional assumptions: see \citep{Eaton1989,wijsman1967cross,farrell2012multivariate,barndorff2012decomposition}) to represent the  $\G$-invariant random variable as a product of two independent variables: a cross-section variable $Y$ and a Haar-distributed random element from $\G$. Under these hypotheses, the EGC of $\G$-separable CMC appears naturally in the population mean. 

\begin{prop}\label{prop:invar_gener}
	Assume the mechanism $m$ is fixed and the cause attribute $X$ is randomly drawn from a  $\G$-invariant generative model and meet the conditions to be represented as $X=gY$ where $g\sim \mu_\G$ and $Y\sim p_Y$ are independent random variables. Then for any multiplicatively separable CMC
	$$
	\mathbb{E}_{X|Y=y_0} \frac{C(mX)}{\langle C\rangle_{m,X}}=1.
	$$  
	In the same way, if the cause $x$ is fixed and the mechanism $M$ is drawn from a $\G$-invariant generative model and meet the conditions to be represented as $M=gNg^{-1}$, then 
	$$
	\mathbb{E}_{M|N=n_0} \frac{C(Mx)}{\langle C\rangle_{M,x}}=1.
	$$
	The same results holds trivially by replacing conditional expectations by full expectations. A similar result is again true whenever mechanism and cause are independent and both $\G$-invariantly distributed.
\end{prop}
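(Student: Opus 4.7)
The plan is to exploit multiplicative $\G$-separability to turn the denominator $\langle C\rangle_{m,X}$ into an explicit product, and then use the $\G$-invariance of the (normalized) contrast plus bi-invariance of the Haar measure to collapse the remaining integral. Without loss of generality I would work with the normalized contrast $C'=C/C(u)$ for a fixed invariant reference attribute $u$, so that Proposition~\ref{prop:K-multiplicative} gives $\langle C'\rangle_{m,x}=C'(mu)\,C'(A(x))$, and the ratio in the claim is unchanged.

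For the first statement, I would condition on $Y=y_0$, use the representation $X=gy_0$ with $g\sim\mu_\G$, and write
$$
\mathbb{E}_{X|Y=y_0}\frac{C'(mX)}{\langle C'\rangle_{m,X}}
=\int_\G \frac{C'(m g y_0)}{C'(mu)\,C'(g y_0)}\,d\mu_\G(g).
$$
Since $C'$ is $\G$-invariant, $C'(g y_0)=C'(y_0)$, so the denominator is constant in $g$ and can be pulled out; the remaining numerator integrates to $\langle C'\rangle_{m,y_0}=C'(mu)\,C'(y_0)$ by multiplicative separability, giving $1$.

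For the mechanism case with $M=gNg^{-1}$ and $N=n_0$ fixed, the key preliminary step is to show that $K$ is conjugation invariant. Using the formula $K(m)=C(mu)/C(u)$ from Proposition~\ref{prop:K-multiplicative}, together with $g^{-1}u=u$ (since $u$ is invariant) and the $\G$-invariance of $C$, one gets $K(gn_0 g^{-1})=K(n_0)$. The ratio then becomes $\int_\G C'(gn_0 g^{-1}x)\,d\mu_\G(g)/(K(n_0)C'(x))$; applying $\G$-invariance of $C'$ removes the outer $g$, and the change of variable $g\mapsto g^{-1}$ (which preserves $\mu_\G$ on a compact group) turns the numerator into $\langle C'\rangle_{n_0,x}=K(n_0)C'(x)$, cancelling the denominator. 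The full-expectation statements and the joint $\G$-invariant case then follow immediately from Fubini and independence.

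The main obstacle I expect is the bookkeeping of invariances: one must carefully separate the $\G$-invariance of $C$ (on $\A'$), the invariance of the reference attribute $u$ (in $\A$), the conjugation invariance of $K$, and the bi-invariance of $\mu_\G$. In particular, the conjugation step in the mechanism case depends on the assumption that the mechanism space $\M$ is stable under conjugation by $\G$, which is implicit in the generative model $M=gNg^{-1}$ and should be noted explicitly so that the rewriting $K(gn_0 g^{-1})=K(n_0)$ is well-defined.
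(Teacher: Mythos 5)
Your proposal is correct and follows essentially the same route as the paper's proof: normalize the contrast without loss of generality, write the conditional expectation as an integral over $g\sim\mu_\G$ via the representation $X=gy_0$, use $\G$-invariance of the contrast to make the denominator constant, and identify the integrated numerator with $\langle C\rangle_{m,y_0}$ by multiplicative separability. The only difference is that you spell out the mechanism case (conjugation invariance of $K$ via $gu=u$ plus inversion invariance of the Haar measure), which the paper dismisses with ``the second case is similar,'' so your added detail is consistent with, and slightly more explicit than, the original argument.
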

\begin{proof}
	We assume the contrast is normalized (without loss of generality). Then for the first case
	$$
	\mathbb{E}_{X|Y=y_0} \frac{C(mX)}{\langle C\rangle_{m,X}}=
	\mathbb{E}_{g\sim \mu_\G} \frac{C(mgy_0)}{C(mu)C(gy_0)}= \frac{\mathbb{E}_{g\sim \mu_\G}C(mgy_0)}{C(mu)C(y_0)}.
	$$  
	The second case is similar.
\end{proof}
This results shows that the \emph{genericity ratio} between the contrast and the EGC is  is on average 1, if we sample from many systems distributed according to an $\G$-invariant generic model based on the ICM postulate. Whenever for the model at hand a concentration result guaranties that this ratio is close to its population mean  with high probability for one single observation, we can use the closeness of this ratio to 1 to test whether the ICM postulate holds for the causal model and choose the plausible causal direction accordingly. Most ICM-based techniques follow this principle, however note that if the contrast truns to be additively separable, the ratio turns into a difference.

A multiplicative contrast can be obtained in the context of $C^*$ algebra, generalizing the case of the Trace Method. Consider an Hilbert space of operators (equiped with a scalar product $\langle .,.\rangle$, and assume attributes can be represented as real valued positive definite operators, while the action of a mechanism $m$ on attribute $a$ is done with an (possibly complex valued) operator $M$ through the equation
$$
ma=MAM^*
$$
where $*$  indicates the complex conjugate.
By choosing the positive linear functional $C(a)=\left\langle A,  e\right\rangle$ as a contrast ($e$ indicates the identity operator), we conjecture that is multiplicatively separable in a quite general case.
\begin{conj}
	The positive linear functional contrast is multiplicatively separable for are all positive definite attributes $A$ whenever the unitary operator $U$ diagonalizing $A$ "belongs" to the generic subgroup of operators $\G$ and
	$$
	\langle C\rangle_{m,x}=C(M M^*)C(A)
	$$
\end{conj}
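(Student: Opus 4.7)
The plan is to reduce $\langle C\rangle_{m,x}$ to a single group-averaging integral and then apply a Schur-type lemma. First, I would invoke the spectral decomposition $A = U_0 D U_0^*$, where $D$ is diagonal with non-negative entries and $U_0$ is a unitary diagonalizing $A$. The hypothesis is precisely that $U_0 \in \G$, so by right-invariance of the Haar measure $\mu_\G$, the substitution $g\mapsto g U_0^*$ absorbs $U_0$ into the integration variable:
\[
\langle C\rangle_{m,x} = \int_\G \langle M g A g^* M^*, e\rangle\, d\mu_\G(g) = \int_\G \langle M g D g^* M^*, e\rangle\, d\mu_\G(g).
\]
This step strips away the eigenbasis of $A$ and leaves only its spectrum inside the integrand.

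Next I would use the identity $\langle XBX^*, e\rangle = \tr(XBX^*) = \tr(BX^*X) = \langle B, X^*X\rangle$ to move $M$ and $g$ to the right of $D$ and exchange integral and pairing by linearity:
\[
\langle C\rangle_{m,x} = \Bigl\langle D,\; \int_\G g^*(M^*M) g\, d\mu_\G(g)\Bigr\rangle.
\]
Then I would invoke the standard fact that $B\mapsto \int_\G g^* B g\, d\mu_\G(g)$ produces an element of the commutant of $\G$. Reading the informal hypothesis as the statement that $\G$ acts irreducibly on the ambient Hilbert space, Schur's lemma forces the commutant to be the scalar multiples of $e$, and a trace computation fixes the scalar:
\[
\int_\G g^*(M^*M) g\, d\mu_\G(g) = \frac{\tr(M^*M)}{n}\, e,
\]
where $n$ is the ambient dimension. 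Substituting back, and using $\tr(M^*M) = \tr(MM^*) = C(MM^*)$ together with $\langle D, e\rangle = \tr(D) = \tr(A) = C(A)$, I obtain the claimed factorization up to a global factor $1/n$, which is absorbed by the same trace normalization already used earlier in the paper.

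The hard part, and the reason the statement is phrased as a conjecture, will be making the hypothesis ``$U$ belongs to $\G$'' precise enough to legitimate both moves at once. The Haar substitution requires $\G$ to contain a diagonalizer for \emph{every} admissible positive-definite attribute, while the Schur step requires $\G$ to act irreducibly; generically these are distinct conditions, and they only coincide for suitably large $\G$ (e.g.\ the full unitary group). If instead $\G$ is a proper subgroup with a non-trivial commutant, the averaged operator decomposes across isotypic components and the clean product $C(MM^*)C(A)$ must be replaced by a sum over these components. Pinning down the minimal group-theoretic assumption on $\G$ under which all isotypic contributions collapse into a single scalar is, in my view, the genuine obstruction to promoting this conjecture to a theorem.
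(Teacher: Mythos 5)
Your argument is sound and, at its core, follows the same strategy as the paper's own (admittedly conjectural) sketch: pull the Haar expectation inside the sesquilinear pairing and argue that the group-averaged conjugation is a scalar multiple of the identity, the scalar being the (normalized) trace. The difference lies in how that key step is handled. The paper averages $GAG^*$ and simply \emph{conjectures} $\mathbb{E}\left[GAG^*\right]=C(A)\,e$ from the hypothesis that a diagonalizer $U$ of $A$ lies in $\G$; you instead use cyclicity to average $g^*M^*Mg$, and justify the scalar-multiple claim rigorously via the commutant/Schur's-lemma argument under an irreducibility assumption on $\G$. That is a genuine improvement: it supplies exactly the missing lemma, and your diagnosis that ``$U\in\G$'' alone is insufficient --- and that with a non-trivial commutant the average splits over isotypic components, spoiling the clean product --- is correct and is precisely what the paper's own NMF computation (Lemma on permutation averaging in the appendix) illustrates, where two invariant subspaces yield the $\frac{1}{n-1}$ formula on centered matrices rather than a pure product. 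Two small remarks: once irreducibility is assumed, your first step (absorbing $U_0$ by Haar invariance) is redundant, since the Schur step already handles arbitrary $A$ without diagonalization; and your handling of the $1/n$ factor is right --- the stated identity $\langle C\rangle_{m,x}=C(MM^*)C(A)$ implicitly requires $C$ to be normalized so that $C(e)=1$, consistent with the normalized trace used in the Trace Method.
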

\begin{proof}
	Starting with the expression of the contrast
	$$
	\mathbb{E}C(mgx)=\mathbb{E}\left\langle MGAG^*M^*,  e\right\rangle
	$$ 
	by bilinearity or sequilinearity we get
	$$
	\mathbb{E}C(mgx)=\left\langle \mathbb{E} \left[GAG^*\right]M^*,  M^*\right\rangle
	$$ 
	Using a variant of the spectral theorem \citep{hall2013quantum}, we may be able to show that $A$ is "diagonal" in the basis defined by some unitary operator $U$. Whenever $U$ belongs to the group $\G$, we conjecture that
	$$
	\left[GAG^*\right]=C(A).e
	$$
	and thus
	$$
	\mathbb{E}C(mgx)=C(A)\left\langle M^*e,  M^*\right\rangle=C(M M^*)C(A)
	$$
\end{proof}

Having such a form for the ERC enables to derive a forward backward inequality using one single additional assumption.
\begin{thm}
	Assume a multiplicatively separable CMC with reference attribute $u$, such that for all $m \in \M\setminus \{\T\} $, 
	\begin{equation}\label{assum:CMCineq}
	C(mu)C(m^{-1}u)>C(u)^2,
	\end{equation}
	then we have for all $m \in \M\setminus \{\T\} $, and all causal model $y:=mx$
	$$
	\frac{C(A(y))}{\langle C\rangle_{m,x}}\cdot\frac{C(A(x))}{\langle C\rangle_{m^{-1},y}}<1
	$$
	and if $m$ and $x$ are $\G$-independent then $y$ and $m^{-1}$ are not.
\end{thm}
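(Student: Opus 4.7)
The plan is to exploit the explicit form of the expected generic contrast provided by multiplicative separability, so that the two EGCs appearing in the product admit a simple closed-form expression in terms of the reference attribute $u$. After substitution, a cancellation will leave an expression depending only on $C(u)$, $C(mu)$ and $C(m^{-1}u)$, on which the hypothesis~(\ref{assum:CMCineq}) can be applied directly. Finally, I will deduce the ``forward-backward'' statement by a contrapositive.

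The first step is to invoke the characterization of $K$ for multiplicatively separable CMC (equation preceding (\ref{prop:K-multiplicative-norm})), which gives
\begin{equation*}
\langle C\rangle_{m,x} \;=\; \frac{C(mu)}{C(u)}\, C(A(x)),
\qquad
\langle C\rangle_{m^{-1},y} \;=\; \frac{C(m^{-1}u)}{C(u)}\, C(A(y)).
\end{equation*}
Plugging these into the quantity of interest produces
\begin{equation*}
\frac{C(A(y))}{\langle C\rangle_{m,x}}\cdot\frac{C(A(x))}{\langle C\rangle_{m^{-1},y}}
\;=\;
\frac{C(A(y))\, C(A(x))\, C(u)^2}{C(mu)\, C(A(x))\, C(m^{-1}u)\, C(A(y))}
\;=\;
\frac{C(u)^2}{C(mu)\, C(m^{-1}u)},
\end{equation*}
and the strict inequality~(\ref{assum:CMCineq}) immediately yields the desired bound. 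This is essentially a one-line computation once separability has been used; the only care required is to make sure both factors are well-defined (positivity of the attribute contrasts), which holds within the CMU framework set up earlier.

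For the second statement, assume $m$ and $x$ are $\G$-generic, i.e.\ $C(A(y))=C(mx)=\langle C\rangle_{m,x}$, so that the first factor equals $1$. If additionally $y$ and $m^{-1}$ were $\G$-generic, then $C(A(x))=C(m^{-1}y)=\langle C\rangle_{m^{-1},y}$, making the second factor equal to $1$ as well; the product would then equal $1$, contradicting the strict inequality established above. Hence genericity in the forward direction excludes genericity in the backward direction, which is the forward-backward identifiability statement.

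The main obstacle here is really conceptual rather than technical: everything hinges on assumption~(\ref{assum:CMCineq}), so the nontrivial task is to argue (elsewhere in the paper) that this inequality is natural for the CMCs of interest, for instance by relating it to a Cauchy--Schwarz-type bound on $C(mu)C(m^{-1}u)\ge C(u)^2$ with equality exactly when $m\in\T$. Once~(\ref{assum:CMCineq}) is granted, the proof itself reduces to the substitution and cancellation sketched above.
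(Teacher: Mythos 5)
Your proof is correct and follows essentially the same route as the paper: use multiplicative separability together with the characterization $K(m)=C(mu)/C(u)$ to reduce the product of the two ratios to $C(u)^2/\bigl(C(mu)\,C(m^{-1}u)\bigr)$, then apply assumption~(\ref{assum:CMCineq}); the paper merely phrases this via $K(m)K(m^{-1})>1$ in two steps where you substitute directly. Your contradiction argument for the forward-backward statement is exactly the intended reading of the paper's "leads immediately to the conclusion."
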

\michel{Marek mentioned in the last meeting that the maximality assumption is important here. Check in which sense.}
\michel{Also, assumption \ref{assum:CMCineq} looks like a convexity property around the identity. Is it more general (maybe we can make the geometric mean appear...) we should be able to show it is a directional second order derivative that is strictly positive}
\begin{proof}
	By separability
	$$
	\frac{C(A(Y))}{\langle C\rangle_{m,x}}\cdot\frac{C(A(x))}{\langle C\rangle_{m^{-1},y}}=\frac{1}{K(m)K(m^{-1})}.
	$$
	By Proposition \ref{prop:K-multiplicative} and assumption~\ref{assum:CMCineq}
	$$
	K(m)K(m^{-1})=\frac{C(mu)C(m^{-1}u)}{C(u)^2}>1,
	$$
	leads immediately to the conclusion.
\end{proof}
Such an inequality is called forward-backward inequality, since it bounds the product of two ratios with the same form, one for the forward model and one from the backward model. Combined with the previously introduced result regarding the concentration of the forward CMC around its mean, it allows identifiability by ensuring that if ICM is ensured in the forward direction, it cannot be achieved in the backward direction.

\begin{example}
	The trace contrast satisfies
	$$
	\mathbb{E}_{U\sim \mu_{O(n)}}\left[\tr (\Sigma_{MUX})\right]=\textstyle{\frac{1}{n}}\tr (MM^T)\tr (\Sigma_X)
	$$
	and is thus multiplicatively separable with $K(M)=\tr (MM^T)/n$.
	In addition, condition~\ref{assum:CMCineq} required for the forward-backward inequality directly stems from the Cauchy-Schwartz inequality and states that, for all invertible matrices $M$ not co-linear to the identity
	$$
	\tr \left(MM^T\right)\tr \left(M^{-1}{M^T}^{-1}\right)<\left(\tr \left(MM^{-1}\right)\right)^2=\left(\tr \left({\bf I}_n\right)\right)^2=n^2
	$$
\end{example}
\michel{for extension to the case of quantum renyi entropies check "INTEGRATION WITH RESPECT TO THE HAAR MEASURE ON
	UNITARY, ORTHOGONAL AND SYMPLECTIC GROUP
	BENOIT COLLINS AND PIOTR SNIADY"
}

\michel{the contrast defined as the Harich-Chandra integral may be additively separable when we have a low rank input according to "A Fourier view on the R-transform and
	related asymptotics of spherical integrals
	A. Guionnet, M. Maıda", to double check though...}
However, CMC are not necessarily multiplicatively separable. Alternatively, it can happen that the ERC can be decompose in a sum of two terms associated to the cause and mechanism respectively. The above definition and results can easily be modified to accommodate this new structure.
\begin{defn}
	A CMC is said to be additively $\G$-separable if there exists a function $K$ such that for all $m\in\M$ and $x\in\X$
	$$
	\langle C\rangle_{m,x}=K(m)+C(A(x))
	$$
\end{defn}
Note that from the definition and group invariance of the CMC, we get immediately that $K(e)=0$. Moreover, using basic properties of invariant attributes and of the Haar measure, we can derive.
\michel{the familly of quantum entropies could be in principle additively separable, the free probability framework predicts that this is true asymptotically when the dimension of matrices goes to infinity, however it seems difficult to prove it for finite dimension. See "Polynomial convolutions and (finite) free probability, Adam W. Marcus" and "Finite Dimensional Statistical Inference,
	Oyvind Ryan" for attempts on free probability in the case of finite dimension}

\begin{prop}\label{prop:K}
	For an additively separable CMC of a CMU with reference attribute $u$, we have the following properties of $K$ for all $m\in\M$
	$$
	K(m)=C(mu)-C(u)
	$$
\end{prop}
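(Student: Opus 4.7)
The plan is to evaluate the EGC at the distinguished reference attribute $u$ and then read off $K(m)$ by matching the two resulting expressions. This mirrors exactly the technique used for the multiplicative case in Proposition~\ref{prop:K-multiplicative}, adapted to the additive decomposition.

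First, I would specialize the additive separability equation $\langle C\rangle_{m,x}=K(m)+C(A(x))$ to the cause $x$ whose attribute is the reference invariant element $u$. On the one hand, substitution immediately yields
\[
\langle C\rangle_{m,u}=K(m)+C(u).
\]
On the other hand, I would compute $\langle C\rangle_{m,u}$ directly from its definition as the Haar integral in \eqref{eq:EGC}. The crucial property being used is that $u$ is an invariant attribute, i.e.\ $gu=u$ for all $g\in\G$; this is precisely what characterizes a reference attribute drawn from the invariant set $\I$. Consequently, inside the integral the variable $g$ drops out:
\[
\langle C\rangle_{m,u}=\int_\G C(mgu)\,d\mu_\G(g)=\int_\G C(mu)\,d\mu_\G(g)=C(mu),
\]
using that $\mu_\G$ is a probability measure.

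Setting the two evaluations equal gives $K(m)+C(u)=C(mu)$, from which the claimed identity $K(m)=C(mu)-C(u)$ follows at once. As a sanity check, one recovers $K(e)=0$ by taking $m=e$, consistent with the remark preceding the proposition. I do not anticipate any genuine obstacle here: the entire argument rests on the invariance $gu=u$ and the normalization of the Haar measure, both of which are already built into the framework. The only subtlety worth flagging is that the proof requires the reference attribute $u$ to actually lie in $\I$ (which is implicit in the terminology "reference attribute"); without this, the key step that collapses the integral would fail.
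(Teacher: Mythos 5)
Your argument is correct and is essentially the paper's own (sketched) route: the paper obtains $K$ by applying the separability definition to the case $A(x)=u$ with $u\in\I$, exactly as in the multiplicative analogue of Proposition~\ref{prop:K-multiplicative}, so that invariance $gu=u$ and normalization of $\mu_\G$ collapse the EGC to $C(mu)$ and give $K(m)=C(mu)-C(u)$. Your flag that $u$ must lie in $\I$ (and implicitly be realizable as an attribute of some cause) is the same tacit assumption the paper makes.
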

Having such a form for the ERC enables to derive a new forward backward inequality.
\begin{thm}
	Assume an additively separable CMC with reference attribute $u$, such that for all $m \in \M\setminus \{\T\} $, 
	\begin{equation}\label{assum:CMCineq_add}
	C(mu)+C(m^{-1}u)>2 C(u),
	\end{equation}
	then we have for all $m \in \M\setminus \{\T\} $, and all causal model $y:=mx$
	$$
	C(A(y))-\langle C\rangle_{m,x}+C(A(x))-\langle C\rangle_{m^{-1},y}<0
	$$
	and if $m$ and $x$ are $\G$-independent then $y$ and $m^{-1}$ are not.
\end{thm}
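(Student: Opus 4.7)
The plan is to mirror the multiplicative forward-backward argument almost line for line, replacing products by sums. First I would invoke additive separability on both discrepancy terms to peel off the factor $C(A(\cdot))$: since $\langle C\rangle_{m,x}=K(m)+C(A(x))$ and $y=mx$, we get
\[
C(A(y))-\langle C\rangle_{m,x}=C(A(y))-K(m)-C(A(x)),
\]
and symmetrically
\[
C(A(x))-\langle C\rangle_{m^{-1},y}=C(A(x))-K(m^{-1})-C(A(y)).
\]

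Adding the two lines causes $\pm C(A(x))$ and $\pm C(A(y))$ to cancel exactly, leaving the clean expression $-K(m)-K(m^{-1})$. At this point I would apply Proposition~\ref{prop:K} to substitute $K(m)=C(mu)-C(u)$ and $K(m^{-1})=C(m^{-1}u)-C(u)$, so the sum of discrepancies equals $2C(u)-C(mu)-C(m^{-1}u)$, which by hypothesis~(\ref{assum:CMCineq_add}) is strictly negative for $m\in\M\setminus\T$. That is exactly the desired forward-backward inequality.

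For the second assertion, I would argue by contradiction: if $(x,m)$ is $\G$-generic in the forward direction, the genericity equation gives $C(A(y))-\langle C\rangle_{m,x}=0$, so the strict inequality just established forces $C(A(x))-\langle C\rangle_{m^{-1},y}<0$; in particular this difference is nonzero, so $(y,m^{-1})$ cannot satisfy the genericity equation, i.e.\ $y$ and $m^{-1}$ are not $\G$-independent.

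The argument is essentially mechanical once Proposition~\ref{prop:K} is available, so there is no genuine obstacle; the only point that warrants a word of care is checking that Proposition~\ref{prop:K} can legitimately be applied to $m^{-1}$ as well as $m$ (which requires $\M$ to be closed under inversion, implicitly needed to even speak of the backward model) and that the reference attribute $u\in\I$ used here is the same one as in the separability hypothesis. Both are standing assumptions, so the proof reduces to the cancellation and substitution steps sketched above.
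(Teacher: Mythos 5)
Your proof is correct and follows essentially the same route as the paper: additive separability collapses the sum of the two discrepancies to $-K(m)-K(m^{-1})$, which the expression $K(m)=C(mu)-C(u)$ turns into $2C(u)-C(mu)-C(m^{-1}u)<0$ under the stated hypothesis, and the non-genericity of the backward pair follows immediately. No issues to flag.
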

\begin{proof}
	By separability
	$$
	C(A(y))-\langle C\rangle_{m,x}+C(A(x))-\langle C\rangle_{m^{-1},y}=-K(m)-K(m^{-1})
	$$
	By Proposition~\ref{prop:K} and assumption~\ref{assum:CMCineq}
	$$
	K(m)+K(m^{-1})=C(mu)+C(m^{-1}u)-2 C(u)>0,
	$$
	leads immediately to the conclusion.
\end{proof}

\begin{example}
	The negative entropy constrast is additively separable for a CMU of probability densities on the unit interval. Let $C(P_X)=-H(P_X)=\int_{[0,\,1]}P_X\log(P_X)$, using the generic group of translations modulo 1 and a differentiable bijective mechanism $f$ we get:
	$$
	\mathbb{E}_{g\sim \mu_\G}\left[ -H(f.g.P_X) \right]=-\int\log\left|\frac{df}{dx}(x)\right|dx-H(P_X).
	$$
	Hence the separability with $K(f)=-\int\log\left|\frac{df}{dx}(x)\right|dx$.
	For any $f$ different from the identity almost everywhere, the condition~\ref{assum:CMCineq_add} stems from the  strict positivity of the differential KL divergence\footnote{$D_{KL}(P\|Q)=\int log(dP/dQ) dP$} since:
	$$
	K(f)+K(f^{-1})=D_{KL}\left( \mathcal{U}\|f^{-1}.\mathcal{U}\right)+D_{KL}\left( \mathcal{U}\|f.\mathcal{U} \right)>0
	$$
	where $\mathcal{U}$ stands for the uniform distribution on the unit interval.
	\michel{Can we go further using information geometry? Prove some convexity leading to identifiability etc... in the case the attributes leave on a manifold (usually affine space...), the Lie group framework may help..}
\end{example}

\subsection{The case of linear CMC's}
{\color{red}
	When the contrast is linear we have  
	\[\int C=C \int mgx=C\int gx(m^{-1}(y))dy\]
}
\michel{what can we say when we can permute integral and contrast by linearity, and that m is regular enough such that we can change variables (diffeomorphisms)?}.

\end{commenth}
\section{A new view on existing methods}
\label{sec:gtview}
We show in this section that the group invariance framework encompasses previous causal inference methods that have been proposed in the literature to solve the pairwise case: given two observables $X$ and $Y$, can we decide between the alternatives ``$X$ causes $Y$" or ``$Y$ causes $X$"?

\subsection{The Trace Method}
We consider the case of $X$ and $Y$ $n$- and $m$-dimensional RVs, respectively, causally related by the linear structural equation 
\begin{equation}\label{eq:multivar_CM}
\textstyle{Y:=MX+E\,,}
\end{equation}
where $M$ is an $m\times n$ structure matrix and $E$ is a multivariate additive noise term. The Trace Method \citep{icml2010_062} postulates independence of cause and mechanism in this scenario as follows:
\begin{post}[Trace Condition]\label{pos:tc1} 
	The cause $X$ with covariance matrix $\Sigma_X$, and the mechanism with matrix representation $M$, are independent of each other if 
	\begin{equation}
	\textstyle{\tau_m(M\Sigma_X M^T) = \tau_n (\Sigma_X) \tau_m (MM^T) \,}
	\label{eq_tc}
	\end{equation}
	holds approximately, where $\tau_n(B)$ denotes the normalized trace ${\rm tr}(B)/n$. 
\end{post}
If we take the normalized trace as a contrast and use generic matrices $U$ distributed according to the Haar measure over the group $SO(n)$, $\mu_{SO(n)}$, the EGC writes
\[
\left\langle C\right\rangle_{M,X}=\mathbb{E}_{U\sim \mu_{SO(n)}}\tau_n ( M U\Sigma_{X} U^T M^T)\,.
\]
This quantity can be evaluated using the following result.
\begin{prop}\label{prop:tracemultsep}
	Let $U$ be a random matrix drawn from $SO(n)$ according to $\mu_{SO(n)}$ and let $A$ and $B$ be two symmetric matrices in $M_{n,n}(\mathbb{R})=\mathbb{R}^{n\times n}$. Then
	\begin{equation}\label{eq:expectU}
	\textstyle{\textstyle{\mathbb{E}_{U\sim\mu_{SO(n)}}\tr \left(U^T A U B\right)=\textstyle\frac{1}{n}\tr (A)\tr (B)}}.
	\end{equation}
\end{prop}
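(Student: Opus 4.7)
My plan is to exploit the bi-invariance of the Haar measure on $SO(n)$ to reduce to the diagonal case, and then compute a single scalar moment $\mathbb{E}_U[U_{ki}^2]$.

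First, since $A$ and $B$ are real symmetric, I can spectrally decompose $A=V_A D_A V_A^T$ and $B=V_B D_B V_B^T$ with $D_A, D_B$ diagonal and $V_A, V_B$ orthogonal. A priori each $V$ lies in $O(n)$, but flipping the sign of one eigenvector (equivalently, negating a column) changes the determinant, so we may always choose $V_A, V_B \in SO(n)$. By the cyclicity of the trace,
\[
\tr(U^T A U B) = \tr\bigl((V_A^T U V_B)^T\, D_A\, (V_A^T U V_B)\, D_B\bigr).
\]
Bi-invariance of $\mu_{SO(n)}$ means the push-forward of $U$ under $U\mapsto V_A^T U V_B$ is again $\mu_{SO(n)}$, so the expectation is unchanged under the substitution. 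Hence it suffices to prove the identity when $A=\mathrm{diag}(a_1,\dots,a_n)$ and $B=\mathrm{diag}(b_1,\dots,b_n)$.

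Second, with both matrices diagonal, a direct expansion gives
\[
\tr(U^T A U B) \;=\; \sum_{i,k=1}^{n} a_k\, b_i\, U_{ki}^{\,2},
\]
and therefore $\mathbb{E}_U \tr(U^T A U B) = \sum_{i,k} a_k b_i\, \mathbb{E}_U[U_{ki}^{\,2}]$. I would then show $\mathbb{E}_U[U_{ki}^{\,2}] = 1/n$ for every pair $(k,i)$ by two elementary observations: (i) the columns of a Haar-distributed $U$ are marginally uniform on the unit sphere $S^{n-1}$, so by rotational symmetry $\mathbb{E}_U[U_{ki}^{\,2}]$ depends only on $i$ and is equal for all $k$; (ii) the unit norm of the $i$-th column gives $\sum_{k} U_{ki}^{\,2}=1$, so the common value is $1/n$. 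Combining these,
\[
\mathbb{E}_U \tr(U^T A U B) \;=\; \frac{1}{n}\sum_{i,k} a_k b_i \;=\; \frac{1}{n}\Bigl(\sum_k a_k\Bigr)\Bigl(\sum_i b_i\Bigr) \;=\; \frac{1}{n}\tr(A)\tr(B),
\]
which is the claim.

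The only conceptual delicacy is making sure the reduction to diagonal matrices stays inside $SO(n)$ rather than $O(n)$, which the sign-flip remark handles for any $n\geq 2$; the scalar moment $\mathbb{E}_U[U_{ki}^{\,2}]=1/n$ is a very classical fact about the uniform distribution on the sphere. A more heavy-handed alternative would be to invoke Schur's lemma on the representation $V\mapsto V\otimes V$ of $SO(n)$ to determine $\mathbb{E}_U[U_{ac}U_{bd}]$ up to a scalar and then fix the scalar by contracting indices, but the diagonalization argument above is shorter and self-contained.
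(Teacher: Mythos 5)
Your proof is correct, and it takes a slightly different route from the paper's. The paper diagonalizes only $A$ (writing $A=V^TDV$ and absorbing $V$ into $U$ by left-invariance of the Haar measure) and then argues that the diagonal entries of $\mathbb{E}_U\left[UBU^T\right]$ are all equal, by conjugating with coordinate-transposition matrices, so that each entry equals $\tr(B)/n$; you instead diagonalize both $A$ and $B$ via bi-invariance and reduce the whole computation to the classical scalar moment $\mathbb{E}_U[U_{ki}^2]=1/n$, obtained from the fact that each column of a Haar-distributed $U$ is uniform on the sphere. Both are short symmetry arguments, but yours is marginally more careful on one point: the paper's transposition matrix $P_{i,k}$ has determinant $-1$ and so is not literally in $SO(n)$ (this is harmless---flip a sign in a third coordinate, or note the contrast only involves squared entries---but it is stated imprecisely), whereas your explicit sign-flip remark keeps $V_A,V_B$ in $SO(n)$ and so the analogous subtlety is handled cleanly. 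What the paper's version buys in exchange is that its ``constant diagonal by group symmetry'' mechanism is the same one it reuses (via Schur's lemma) for the permutation group in the NMF lemma, so the two appendix proofs share a template; your version buys a fully self-contained reduction to a single elementary moment and needs no claim about $\mathbb{E}_U\left[UBU^T\right]$ as a matrix.
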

This leads to 
\[
\left\langle C\right\rangle_{M,X} = \tau_n (\Sigma_X) \tau_m (MM^T)+\tau_m(\Sigma_E)
\]
where $\Sigma_E$ denotes the additive noise covariance. As a consequence, the above Trace Condition (\ref{eq_tc}) is a genericity equation from a group theoretic perspective (note this condition remains unaffected by the additive noise term). 

While we will present elsewhere that many other causal inference approaches can be formulated with the present group theoretic framework, we provide in appendix the additional example of the Spectral Independence Criterion (SIC) approach \citet{shajarisale2015} in the context of time series.
\section{Application to unsupervised learning}
\label{sec:groupnmf}
This section develops the idea that unsupervised learning can be a new field of application for ICM principles based on group invariance.  Many unsupervised learning algorithms can indeed be thought of as generative models, and we propose to add a causal perspective to them in order to improve their characterization and inference.

\subsection{Causal generative models}
Classically generative models aim at modeling the probability distribution of observations. However, we often expect from such model to capture information about the true generative process, in order to better understand its underlying mechanisms. Take for example the case of clustering using Gaussian mixture models, when experimental scientists cluster a dataset, they expect that the resulting clusters reflect a reliable structure in their data. Put more explicitly, they expect that their clusters will be robust to moderate changes in the data generating mechanism, such that another experimenter replicating the experiment will be able to find similar clusters.\michel{connect to robustness postulate in causal inference} Such required property, although most commonly not explicitly stated, puts the clustering task in a causal inference perspective. Like for any causal inference problem, finding plausible causal generative models will require assumptions on the data generating mechanism. We can for example, try to exploit the ICM postulate to learn the structure of generative models from a causal perspective. As suggested in \citep{anticausal}, many real world datasets have an intuitive underlying causal structure that we may exploit to improve learning algorithms. For instance, in character recognition datasets such as MNIST, the character that a human intents to write is a cause for the observed hand-written character image.  

In this section, we assume the setting of \cref{fig:latent} in which observations are generated from latent variables trough a possibly (partially) unknown mechanism. We will postulate ICM holds between latent causes and the mechanism. We apply this strategy to specific unsupervised learning algorithms that have been used in a wide range of areas: Non-negative Matrix factorization (NMF) \citep{lawton1971self,paatero1994positive,lee1999learning} and the classical Gaussian mixture model for clustering. Finally, we draw a connection between our framework and GANs.  
\subsection{Non-negative Matrix factorization (NMF)}
Given a matrix of non-negative coefficients $\mathbf{X}$. The non-negative matrix factorization problem aims at finding two low rank matrices of non-negative coefficients $\mathbf{W}$ and $\mathbf{V}$, such that
$$
\mathbf{X}=\mathbf{W}\mathbf{V}^\top
$$
holds approximately. We will assume that $\mathbf{X}$ corresponds to observations that are generated by a generative model with latent variables representing ``sources" stored in $\mathbf{W}$, that are linearly mixed by an unknown linear process, whose coefficients are stored in $\mathbf{V}$. The interpretation of this model in terms of generative mechanism is application dependent but as been a motivation for using such method since early work \citep{anttila1995source}.
The NMF problem can be solved using several algorithms, but being NP-hard in general, it is difficult to get guaranties that the estimated factors are close to the true generative model.

\subsubsection{Permutation invariance hypothesis}
We assume that the matrices $\mathbf{W}$ and $\mathbf{V}$ are selected from permutation invariant distributions, in the sense that permuting the columns of $\mathbf{W}$ and the columns of $\mathbf{V}$ independently leads to a causal model which is as likely to occur in Nature. This implies that there is no particular relationship between a given column of $\mathbf{W}$ and its corresponding column in $\mathbf{V}$. 

\subsubsection{Contrast}
A way to assess whether the relationship between columns of $\mathbf{W}$ and of $\mathbf{V}$ is generic is to use the matrix squared $\ell_2$ norm of the observation matrix as a contrast:
\begin{equation}\label{eq:NMFtrace}
\tr \left[ \mathbf{X}\mathbf{X}^\top\right] = \tr \left[\mathbf{W}^\top \mathbf{W} \mathbf{V}^\top \mathbf{V}\right]
\end{equation}
The right hand side in the above equation indeed shows that the similarity matrices $\mathbf{W}^\top \mathbf{W}$ and $\mathbf{V}^\top \mathbf{V}$ are compared using the trace of their product, which is a scalar product for matrices. If the orderings of columns of $\mathbf{W}$ and $\mathbf{V}$ are unrelated (meaning other choices of orderings would be as likely to occur), one expects the eigenvectors of $\mathbf{W}^\top \mathbf{W}$ to have a generic orientation with respect to the eigenvectors of $\mathbf{V}^\top \mathbf{V}$, leading to "average" trace values. To make this more precise, we will introduce some notations.

Let $\mathbb{S}_n$ be the symmetric group that we will abusively identify to the set of $n\times n$ permutation matrices. We will thus abusively denote ${\mu}_{\mathbb{S}}$ the corresponding Haar measure on this group. As the group is finite and contains $n!$ elements, ${\mu}_{\mathbb{S}}$ assigns the probability $\frac{1}{n!}$ to each group element. 
In order to assess genericity of the trace value in \cref{eq:NMFtrace}, we evaluate the EGC:
\begin{equation}\label{eq:NMFtracegene}
\mathbb{E}_{P\sim {\mu}_{\mathbb{S}}} \tr\left[P \mathbf{W}^\top \mathbf{W} P^\top \mathbf{V}^\top \mathbf{V}\right].
\end{equation}
The result is provided in the following proposition\footnote{To shorten formulas, we use a modified version of the contrast so that it acts on centered matrices.}
\begin{prop}\label{prop:nmfcontr}
	Let $\tilde{\mathbf{M}}$ be the centered matrix obtained by subtracting the mean column $\bar{\mathbf{M}}$ from each column,
	\[
	\mathbb{E}_{P\sim {\mu}_{\mathbb{S}}}\!\tr [P \tilde{\mathbf{W}}^\top \tilde{\mathbf{W}} P^\top \tilde{\mathbf{V}}^\top \tilde{\mathbf{V}}]\!=\!\frac{\tr[\tilde{\mathbf{W}}^\top\tilde{\mathbf{W}}]\tr[\tilde{\mathbf{V}}^\top\tilde{\mathbf{V}}]}{n-1}.
	\]
\end{prop}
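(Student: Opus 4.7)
The plan is to reduce the left-hand side to a sum over pairs of indices, where the expectation only sees whether two indices are equal or not, and then exploit the centering to kill the off-diagonal contribution.

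First I would introduce $A = \tilde{\mathbf{W}}^\top\tilde{\mathbf{W}}$ and $B = \tilde{\mathbf{V}}^\top\tilde{\mathbf{V}}$, both $n\times n$ symmetric. Identifying a permutation matrix $P$ with the permutation $\pi$ via $P_{ij}=\delta_{j,\pi(i)}$, the entries of $PAP^\top$ are simply $A_{\pi(i),\pi(l)}$, so that
\[
\tr[P A P^\top B] \;=\; \sum_{i,l} A_{\pi(i),\pi(l)}\, B_{li}\,.
\]
Taking expectation over $\pi\sim \mu_{\mathbb{S}}$, the only thing that matters is whether $i=l$ or $i\neq l$. For $i=l$, $\pi(i)$ is uniform on $\{1,\dots,n\}$ so $\mathbb{E}\,A_{\pi(i),\pi(i)}=\tr(A)/n$. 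For $i\neq l$, $(\pi(i),\pi(l))$ is uniform on ordered pairs of distinct indices, so $\mathbb{E}\,A_{\pi(i),\pi(l)}=(\ones^\top A\ones-\tr A)/(n(n-1))$.

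Next I would split the sum accordingly and collect the two traces. Writing $s(M)=\ones^\top M\ones$, this produces
\[
\mathbb{E}_\pi\tr[PAP^\top B] \;=\; \frac{\tr(A)\tr(B)}{n} \;+\; \frac{\bigl(s(A)-\tr A\bigr)\bigl(s(B)-\tr B\bigr)}{n(n-1)}\,.
\]
Here I would invoke the centering hypothesis: by construction $\tilde{\mathbf{W}}\ones = 0$ and $\tilde{\mathbf{V}}\ones=0$, so $A\ones = \tilde{\mathbf{W}}^\top(\tilde{\mathbf{W}}\ones)=0$ and similarly $B\ones=0$, giving $s(A)=s(B)=0$.

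Substituting and combining over the common denominator $n(n-1)$ yields
\[
\mathbb{E}_\pi\tr[PAP^\top B] \;=\; \tr(A)\tr(B)\left(\frac{1}{n}+\frac{1}{n(n-1)}\right) \;=\; \frac{\tr(A)\tr(B)}{n-1},
\]
which is the claim. The only step requiring care is the two-case computation of $\mathbb{E}_\pi A_{\pi(i),\pi(l)}$ and verifying that the $i\neq l$ contribution vanishes due to centering; everything else is bookkeeping.
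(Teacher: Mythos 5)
Your computation is correct, but it proceeds by a genuinely different route than the paper. You expand $\tr[PAP^\top B]=\sum_{i,l}A_{\pi(i),\pi(l)}B_{li}$ and evaluate $\mathbb{E}_\pi A_{\pi(i),\pi(l)}$ by the two-case symmetry of a uniform random permutation (diagonal versus off-diagonal), after which the centering hypothesis $\tilde{\mathbf{W}}\ones=\tilde{\mathbf{V}}\ones=0$ kills the off-diagonal term $\bigl(s(A)-\tr A\bigr)\bigl(s(B)-\tr B\bigr)$ and the two remaining fractions combine to $\tr(A)\tr(B)/(n-1)$; all of these steps check out. The paper instead proves a structural lemma: it observes that $\mathbb{E}_{P}\,PAP^\top$ commutes with every permutation matrix, invokes Schur's lemma on the two invariant subspaces of the permutation representation (the span of $\ones$ and its orthogonal complement), and concludes that $\mathbb{E}_P\,PAP^\top$ is a multiple of the identity on each, with coefficients $\alpha=\frac{1}{n-1}\bigl(\tr A-\frac1n\tr(\Ones A)\bigr)$ and $\lambda=\frac1n\tr(\Ones A)$; it then uses Haar invariance to insert a second independent permutation $Q$ acting on $\mathbf{V}^\top\mathbf{V}$ so that both factors become simultaneously diagonal, yielding $(n-1)\alpha\beta+\lambda\gamma$, which reduces to the claim once centering sets $\lambda=\gamma=0$. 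The trade-off: the paper's lemma delivers the full expected matrix $\mathbb{E}_P\,PAP^\top$, which holds without centering and is reusable (e.g.\ to see exactly which rank-one correction appears for uncentered data), whereas your argument is shorter, entirely elementary, avoids representation theory and the extra randomization by $Q$, and makes explicit that centering enters only through the vanishing of the off-diagonal sums $s(A)$ and $s(B)$. Either proof is acceptable; yours is a valid and in fact more self-contained verification of the stated identity.
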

\subsubsection{Experiments}
\begin{figure*}
	\begin{subfigure}{.44\textwidth}
		\includegraphics[width=.85\textwidth]{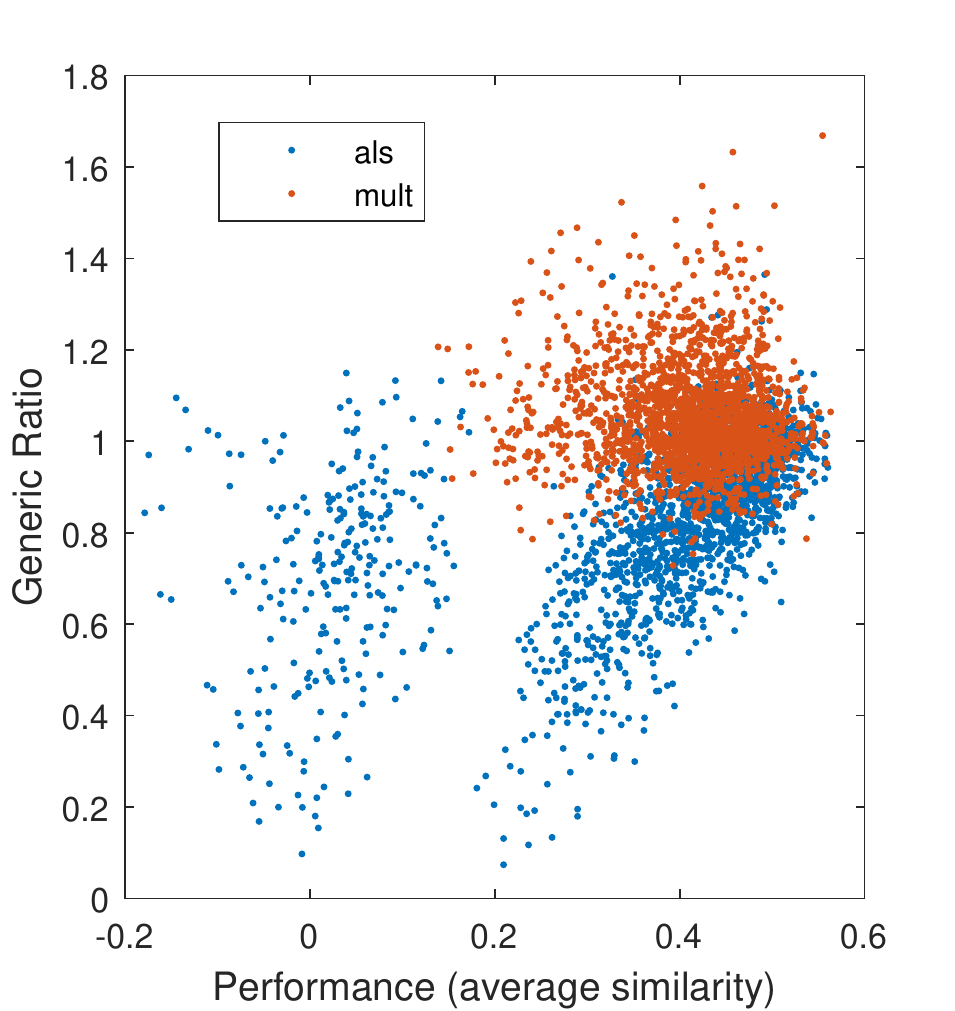}
		\subcaption{\label{fig:scatternmf}}
	\end{subfigure}
	\begin{subfigure}{.5\textwidth}
		\includegraphics[width=\textwidth]{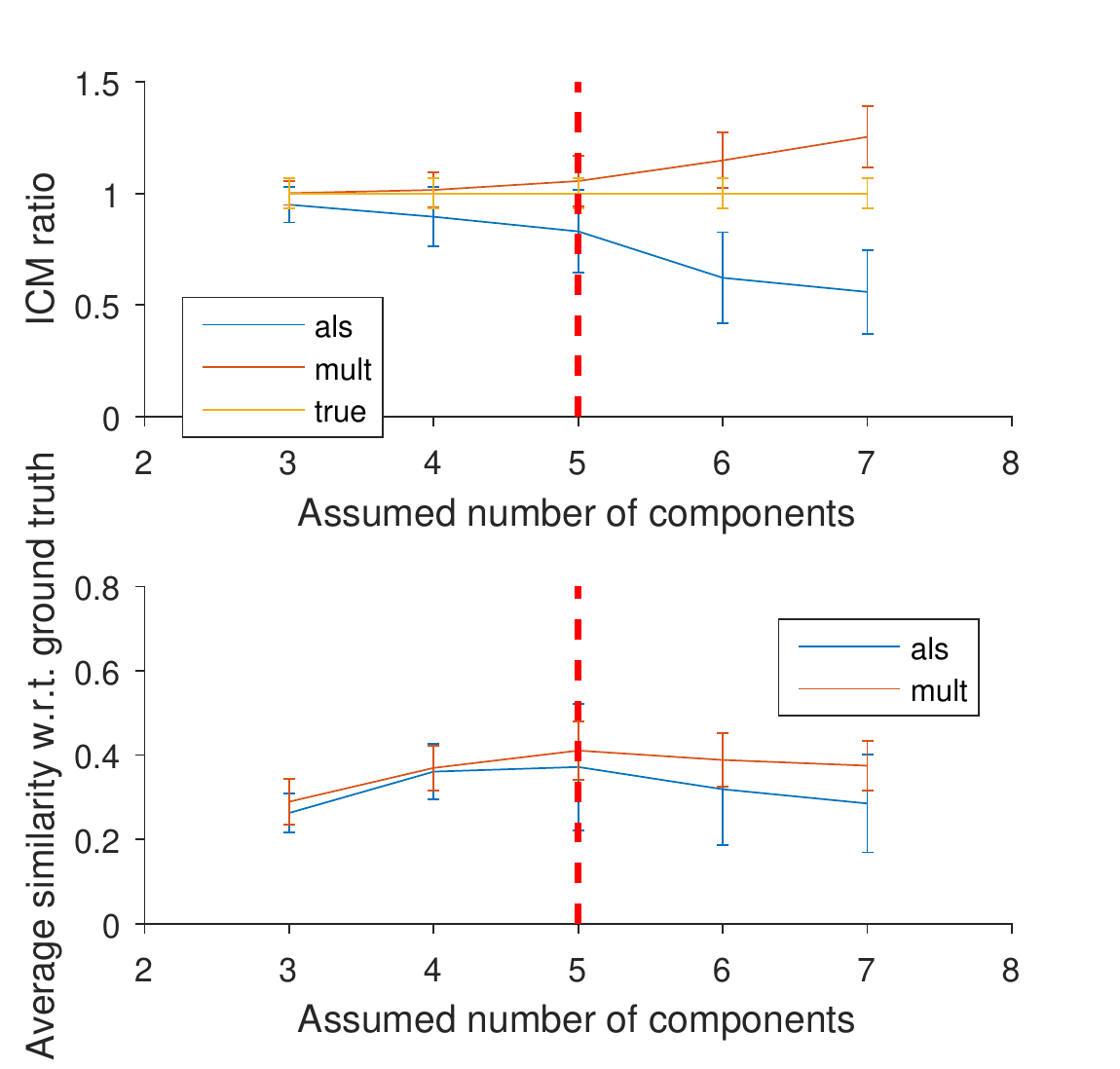}
		\subcaption{\label{fig:ncompnmf}}
	\end{subfigure}
	\vskip -.4cm
	\caption{(a) Scatter plot of performance versus generic ratio for both NMF algorithms. (b) Performance and generic ratio depending on the assumed number of components (dashed line indicates ground truth).\label{fig:nmfres}}
	\vskip -.2cm
\end{figure*}
In order to assess whether the group invariance framework is useful to better infer the NMF generative process, we simulated the model by generating $(20\time50)$ data matrices using 5 NMF components: $\mathbf{W}$ matrices were generated with i.i.d. coefficients uniformly distributed on the unit interval, and sparse matrices $\mathbf{V}$ by selecting non-zero coefficients by sampling from i.i.d. Bernoulli variables with probability $.1$ (sparsity of one factor matrix tends to increase the identifiability of NMF models \citep{donoho2003does}), the value of the selected non-vanishing coefficients were drawn from i.i.d. uniform distributions on the unit interval. A small i.i.d. uniform additive noise was added to the data matrix. First, we assumed the number of components was known and we quantified the performance of the algorithm by computing the average cosine similarity between the columns of ground truth $\mathbf{W}$ and the corresponding best matching columns estimated by the NMF algorithms. Simultaneously, we estimated the generic ratio of the ground truth model and the estimated models. Results for two different algorithms (alternating least squares, 'als', and multiplicative updates, 'mult') for 2000 simulations are provided in \cref{fig:scatternmf}. They show that both algorithms tend to introduce dependencies between the estimated latent variables and the mechanism, as the generic ratio tends to be larger than one for 'mult', while the ratio is smaller than one for 'als'. In addition, while the 'mult' performance tends to be less variable across trials, and leads to generic ratios close to one, the 'als' algorithm fails frequently  and leads to particularly low values of the generic ratio. We next assumed that the number of components is unknown, then \cref{fig:ncompnmf} show the evolution of performance and generic ratio depending on the number of estimated components. Overestimation of the number of components leads to a generic ratio that progressively departs from one, suggesting that it can be used to indicate a misspecification of the model. Overall these results suggest that the generic ratio can be used to detect failures of NMF algorithms, assuming that the generic causal model respects the ICM postulate. In addition, the behavior of the generic ratio is algorithm specific, introducing  perturbations in the estimated parameters that can be quantified and exploited to improve the algorithm.
 
\subsection{Clustering}
Consider the following classical Gaussian Mixture Model of the observed multivariate data $\boldsymbol{X}$ using latent variable $Z$.
\begin{eqnarray}
Z & \sim & \mbox{Mult}(\pi_1,\pi_2,\cdots,\pi_K)\,,\\
\boldsymbol{X}|\{z=k\} & \sim &\mathcal{N}(\boldsymbol{\mu}_k,\boldsymbol{\Sigma}_k)\,,
\end{eqnarray}
where $z$ indicates the cluster membership of one observation, and $\boldsymbol{\mu}_k$, $\boldsymbol{\Sigma}_k$ are means and covariances of the $p$-dimensional Gaussian distribution of each cluster.

\subsubsection{Invariance hypothesis}
To get an insight of what form of genericity is relevant for such generative model, imagine the collected data reflects the phenotype of different subspecies of plants (similarly to the popular Iris dataset). Each cluster mean $\boldsymbol{\mu}_k$ reflects the average characteristics of the species, while the covariance matrices $\boldsymbol{\Sigma}_k$ express the variations of these characteristics across the subpopulation. If we assume that each subspecies has emerged independently (say on different continents) and that they never interacted with each other (no competition for resources), we suggest that the variability within each subspecies should be unrelated to the variations across species. As a consequence, we could imagine that randomizing the properties of $\boldsymbol{\mu}_k$'s while keeping $\boldsymbol{\Sigma}_k$'s constant would lead to a model as likely to have been generated by Nature as the observed dataset.

This can be made quantitative by representing the mixture using the following generative model
\[
\boldsymbol{X}=\boldsymbol{V}+\boldsymbol{M}\,,\,\boldsymbol{V}|\{z=k\}\!\!\sim\!\! \mathcal{N}(0,\boldsymbol{\Sigma}_k)\,,\,\boldsymbol{M}|\{z=k\}\!\!=\!\!\boldsymbol{\mu}_k.
\]
The model is thus decomposed into the sum of a (intra-cluster) variability component $\boldsymbol{V}$ and a cluster mean component $\boldsymbol{M}$. 

\begin{figure*}
	\begin{subfigure}{.33\textwidth}
		\includegraphics[width=1.15\textwidth]{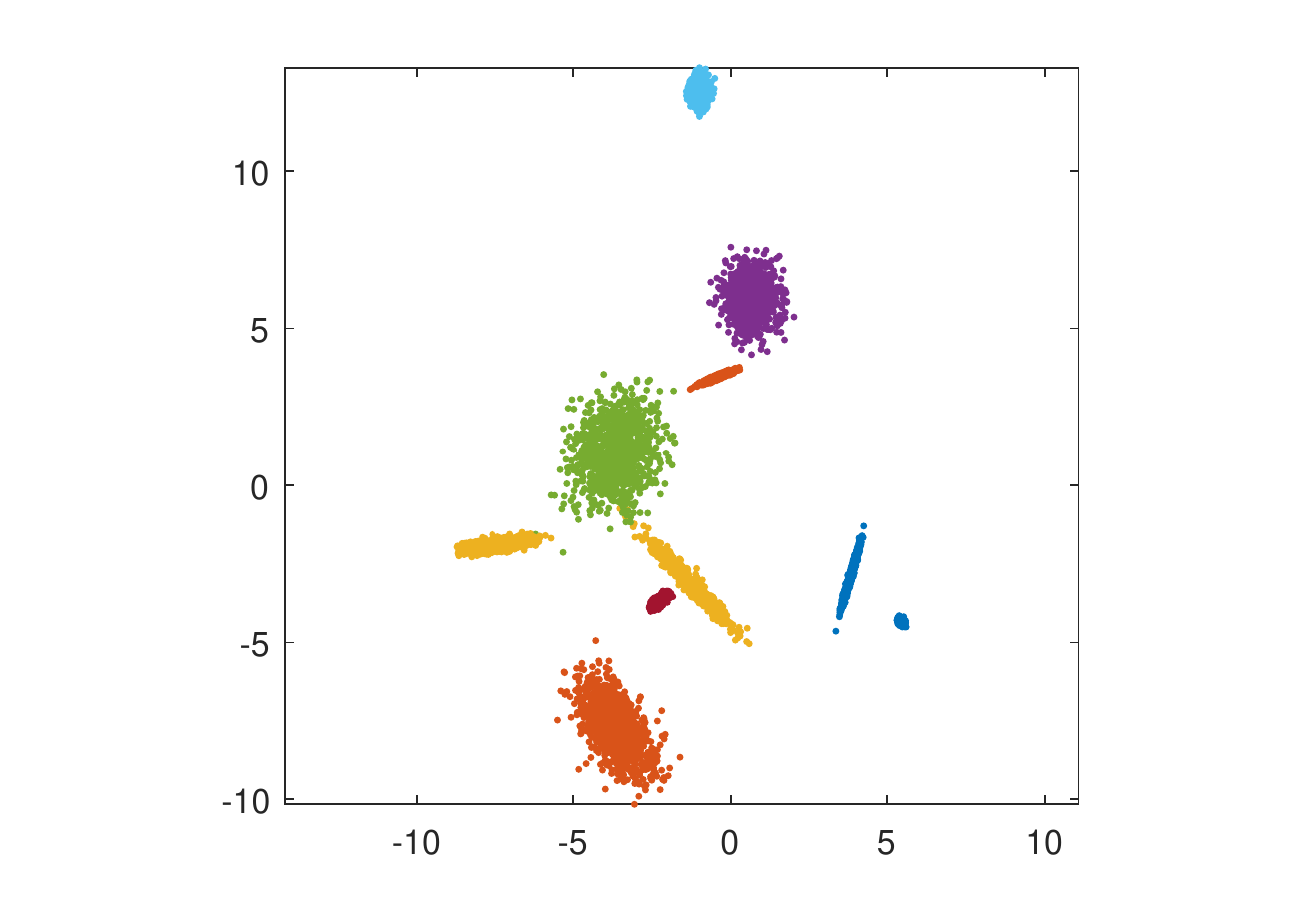}
		\subcaption{\label{fig:clustscatt1}}
	\end{subfigure}
	\begin{subfigure}{.33\textwidth}
		\includegraphics[width=.95\textwidth]{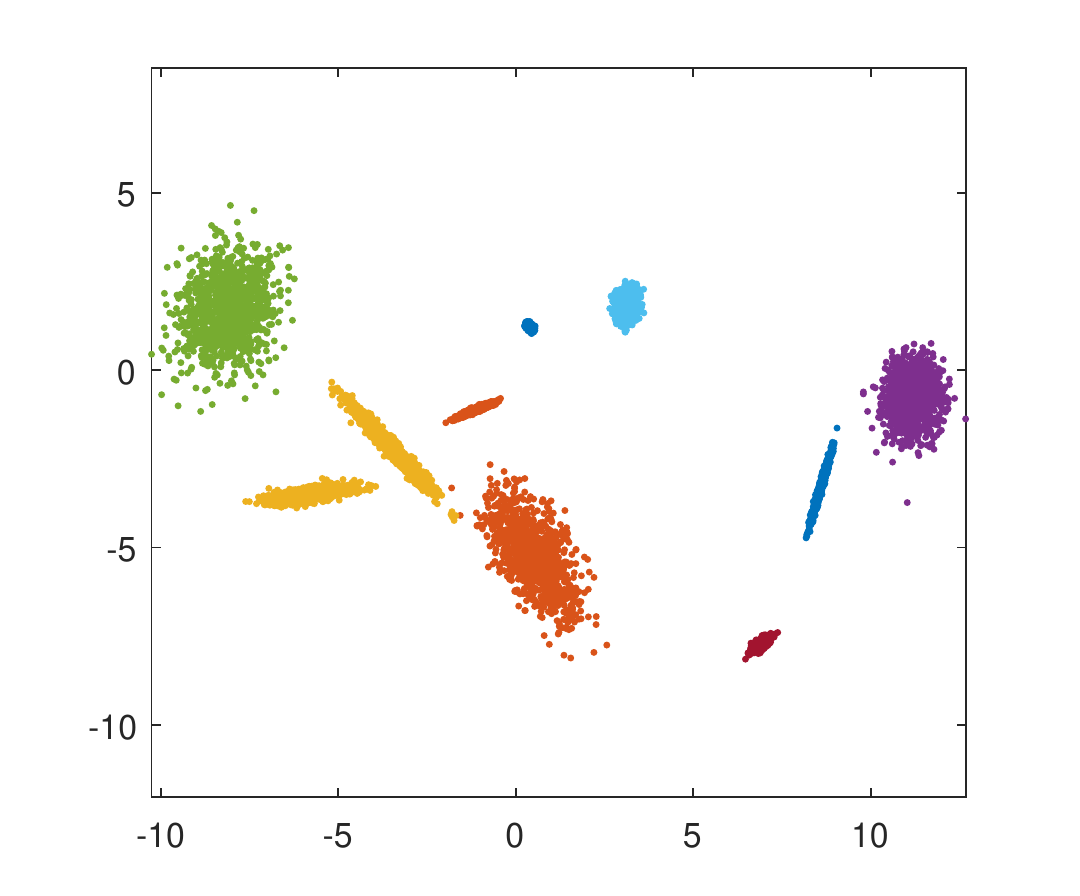}
		\subcaption{\label{fig:clustscatt2}}
	\end{subfigure}
\hspace*{-.3cm}
	\begin{subfigure}{.3\textwidth}
		\includegraphics[width=1.05\textwidth,height=.85\textwidth]{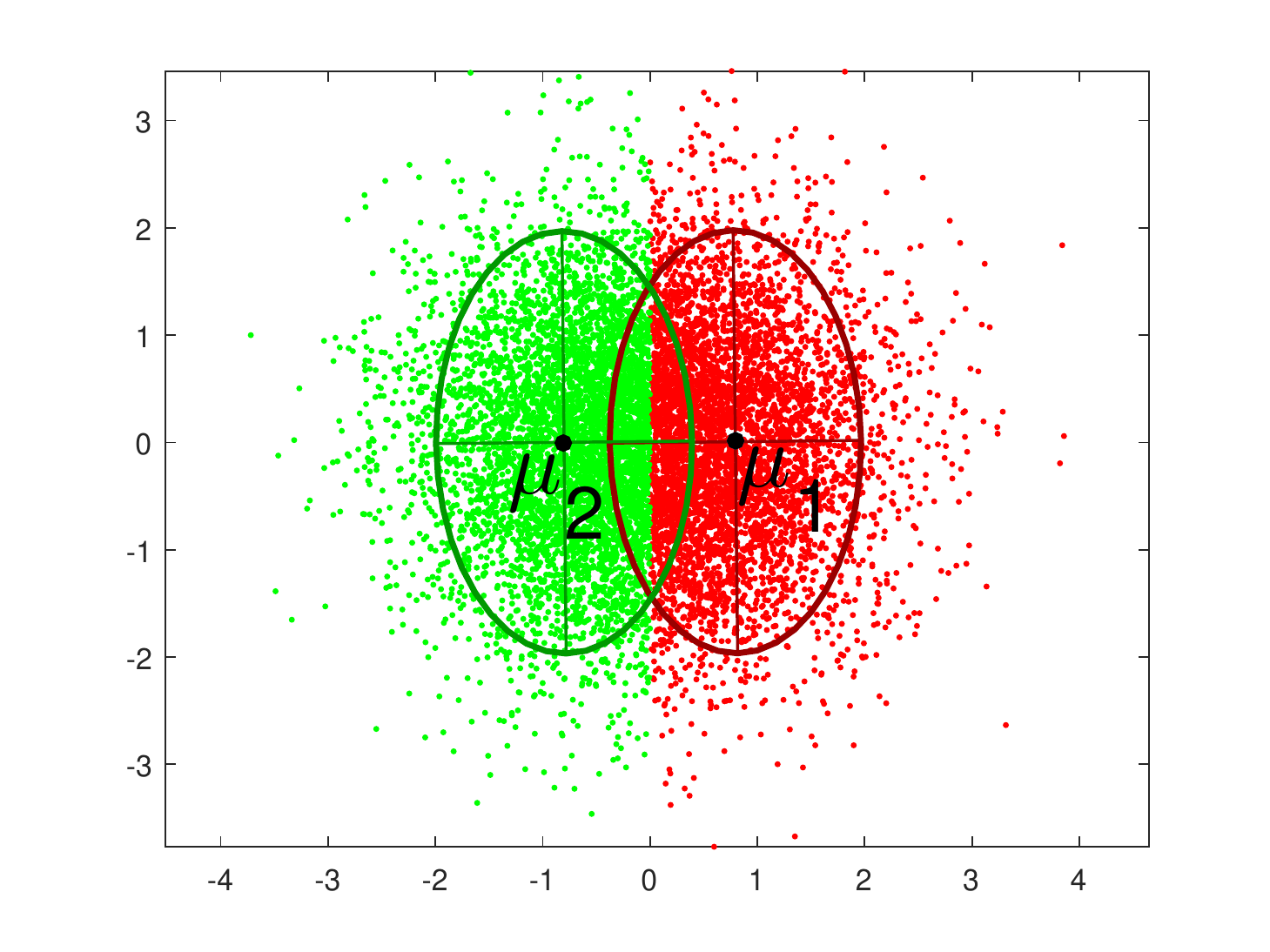}
		\subcaption{ \label{fig:clustsuspic}}
	\end{subfigure}
	\caption{(a) Cluster data generated with random parameters (projected on 2 components) (b) Data in (a) after a generic transformation (c) Suspicious dependency between cluster means and covariances in the case of a mispecified number of clusters}
\end{figure*}

 We then choose $O(p)$ as generic group: and p-dimensional orthogonal matrices from this group act on the mean vectors by left multiplication to the variable $\boldsymbol{M}$, before $\boldsymbol{V}$ is added. Application of one generic transformation results in clusters with the same intra-cluster variability as the original data, but whose locations in the feature space have been randomized, as illustrated on \cref{fig:clustscatt1,fig:clustscatt2} with an 5-dimensional feature space and 10 clusters. In this illustration, the structure of the observations does not seem to be affected by the transformation, suggesting the original data is "typical" in some sense. This makes sense as mean and covariance parameters have been drawn independently at random. However, there are simple pathological examples where a clustering algorithm can fail to capture the underlying structure of the data and generate an atypical dependency between means and covariances. Assume for example that, focusing on one single Gaussian cluster, a clustering algorithm fails to identify a single cluster and instead cuts it in two clusters. This situation illustrated on \cref{fig:clustsuspic} shows an interesting dependency between the centroids of the two clusters and their within cluster empirical covariance matrices: the difference between centroids is oriented in the direction (eigenspace) of smallest variance. We postulate that such suspicious dependencies may appear when the clustering algorithm fails to capture the causal structure of the data. 
 
 \subsubsection{Contrast}
 To detect such suspicious dependencies in the inferred generative model using the group theoretic approach, we propose the following 4th order tensor contrast
\[
C(\boldsymbol{X}) = \mathbb{E}\tr \left[\boldsymbol{X}\boldsymbol{X}^\top \boldsymbol{X}\boldsymbol{X}^\top\right]\,.
\]
One justification for using this contrast is the following.
\begin{prop}\label{prop:clustgene}
	Let $\boldsymbol{X}$ be a centered multivariate Gaussian mixture random variable, and $\left\langle C\right\rangle_{\boldsymbol{\mu},\boldsymbol{\Sigma}}$ the generic contrast obtained by random orthogonal transformation applied to cluster means, then 
\[
C(\boldsymbol{X})-\left\langle C\right\rangle_{\boldsymbol{\mu},\boldsymbol{\Sigma}}\!=\!4 \sum_k \!\pi_k\!\! \left(\boldsymbol{\mu}_k^\top \boldsymbol{\Sigma}_k \boldsymbol{\mu}_k\!-\! \|\boldsymbol{\mu}_k\|^2 \frac{\tr\left[\boldsymbol{\Sigma}_k\right]}{p}\right).
\]
\end{prop}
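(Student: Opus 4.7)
The plan is to reduce the contrast to a scalar moment computation, expand it using standard Gaussian fourth-moment formulas, and isolate the only term that is affected by the orthogonal randomization of the means.

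First I would observe that since $\boldsymbol{X}\boldsymbol{X}^\top$ is a rank-one matrix, $\tr[\boldsymbol{X}\boldsymbol{X}^\top\boldsymbol{X}\boldsymbol{X}^\top]=(\boldsymbol{X}^\top\boldsymbol{X})^2=\|\boldsymbol{X}\|^4$, so $C(\boldsymbol{X})=\mathbb{E}\|\boldsymbol{X}\|^4$. Next I would condition on the cluster label $z=k$ and write $\boldsymbol{X}=\boldsymbol{V}+\boldsymbol{\mu}_k$ with $\boldsymbol{V}\sim\mathcal{N}(0,\boldsymbol{\Sigma}_k)$, then expand
\[
\|\boldsymbol{X}\|^4=\bigl(\|\boldsymbol{V}\|^2+2\boldsymbol{V}^\top\boldsymbol{\mu}_k+\|\boldsymbol{\mu}_k\|^2\bigr)^2.
\]
Taking expectation over $\boldsymbol{V}$, the odd-order terms in $\boldsymbol{V}$ vanish, and using the standard identities $\mathbb{E}\|\boldsymbol{V}\|^4=(\tr\boldsymbol{\Sigma}_k)^2+2\tr(\boldsymbol{\Sigma}_k^2)$ and $\mathbb{E}(\boldsymbol{V}^\top\boldsymbol{\mu}_k)^2=\boldsymbol{\mu}_k^\top\boldsymbol{\Sigma}_k\boldsymbol{\mu}_k$, one obtains
\[
\mathbb{E}[\|\boldsymbol{X}\|^4\mid z=k]=(\tr\boldsymbol{\Sigma}_k)^2+2\tr(\boldsymbol{\Sigma}_k^2)+2\|\boldsymbol{\mu}_k\|^2\tr\boldsymbol{\Sigma}_k+\|\boldsymbol{\mu}_k\|^4+4\boldsymbol{\mu}_k^\top\boldsymbol{\Sigma}_k\boldsymbol{\mu}_k,
\]
and $C(\boldsymbol{X})$ is the convex combination weighted by $\pi_k$.

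For the EGC, I would apply the same derivation after substituting $\boldsymbol{\mu}_k$ by $U\boldsymbol{\mu}_k$, where $U\sim\mu_{O(p)}$. The key observation is that orthogonality preserves $\|U\boldsymbol{\mu}_k\|$, while $\tr\boldsymbol{\Sigma}_k$ and $\tr(\boldsymbol{\Sigma}_k^2)$ depend only on $\boldsymbol{\Sigma}_k$. Hence the only term modified by the randomization is $4\boldsymbol{\mu}_k^\top\boldsymbol{\Sigma}_k\boldsymbol{\mu}_k$, which becomes $4\boldsymbol{\mu}_k^\top U^\top\boldsymbol{\Sigma}_k U\boldsymbol{\mu}_k$. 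Invoking the standard Haar identity
\[
\mathbb{E}_{U\sim\mu_{O(p)}}[U^\top\boldsymbol{\Sigma}_k U]=\frac{\tr\boldsymbol{\Sigma}_k}{p}\boldsymbol{I},
\]
this term averages to $4\|\boldsymbol{\mu}_k\|^2\tr\boldsymbol{\Sigma}_k/p$.

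Finally I would subtract the two expressions term by term: all contributions cancel except the one that was modified by the randomization, yielding
\[
C(\boldsymbol{X})-\langle C\rangle_{\boldsymbol{\mu},\boldsymbol{\Sigma}}=4\sum_k\pi_k\!\left(\boldsymbol{\mu}_k^\top\boldsymbol{\Sigma}_k\boldsymbol{\mu}_k-\|\boldsymbol{\mu}_k\|^2\frac{\tr\boldsymbol{\Sigma}_k}{p}\right).
\]
There is no real obstacle here beyond careful bookkeeping of the Gaussian moments; the only subtlety worth flagging is justifying the Haar identity $\mathbb{E}_U[U^\top A U]=(\tr A/p)\,\boldsymbol{I}$, which follows from the fact that the left-hand side commutes with every orthogonal matrix (by bi-invariance of the Haar measure) and is therefore a scalar multiple of the identity, with scalar fixed by taking the trace.
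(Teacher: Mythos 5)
Your proof is correct and follows essentially the same route as the paper's: condition on the cluster label, expand the fourth-order quantity around $\boldsymbol{X}=\boldsymbol{\mu}_k+\boldsymbol{V}$, use the Gaussian moment identities, and observe that only the cross term $4\boldsymbol{\mu}_k^\top\boldsymbol{\Sigma}_k\boldsymbol{\mu}_k$ is affected by the orthogonal randomization, averaging to $4\|\boldsymbol{\mu}_k\|^2\tr[\boldsymbol{\Sigma}_k]/p$. Your rank-one observation $\tr[\boldsymbol{X}\boldsymbol{X}^\top\boldsymbol{X}\boldsymbol{X}^\top]=\|\boldsymbol{X}\|^4$ is merely a tidy shortcut that replaces the paper's sixteen-term matrix expansion, and you make explicit the final Haar-averaging step ($\mathbb{E}_U[U^\top\boldsymbol{\Sigma}_k U]=(\tr[\boldsymbol{\Sigma}_k]/p)\,\boldsymbol{I}$) that the paper leaves implicit.
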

Indeed, this result shows that differences between the contrast of the data and the EGC quantify the alignment between the cluster means $\boldsymbol{\mu}_k$ and the principal axes of the covariance matrices $\boldsymbol{\Sigma}_k$. We then test empirically whether the resulting generic ratio can indicate suspicious dependencies in the solution of clustering algorithms.

\begin{figure*}
	\centering
	\begin{subfigure}{.34\textwidth}
			\includegraphics[width=\textwidth]{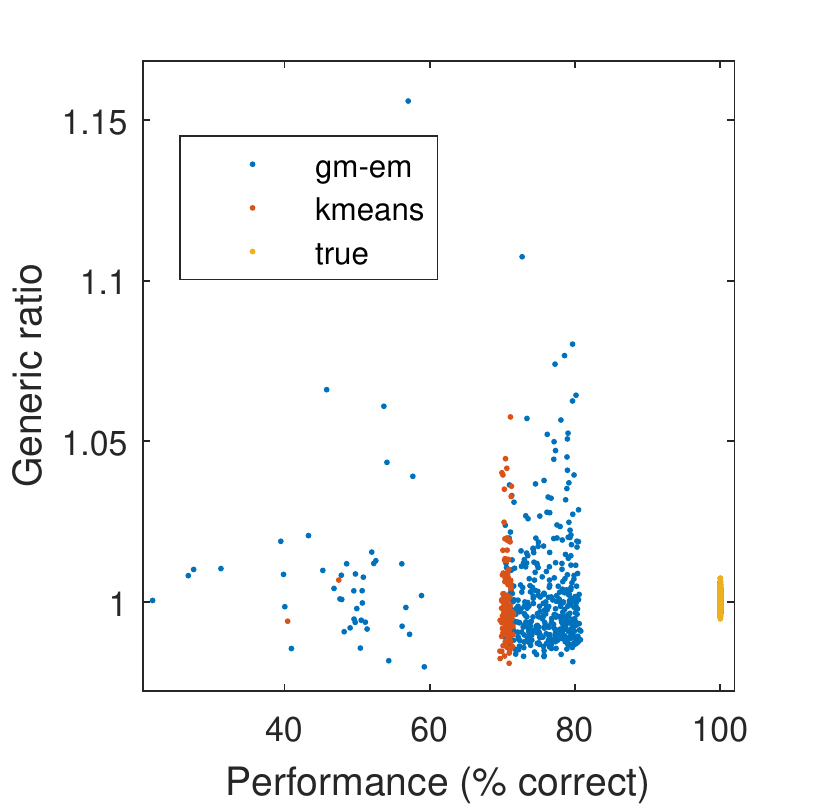}
			\subcaption{\label{fig:clustresscat}}
	\end{subfigure}
	\begin{subfigure}{.6\textwidth}
		\includegraphics[width=\textwidth]{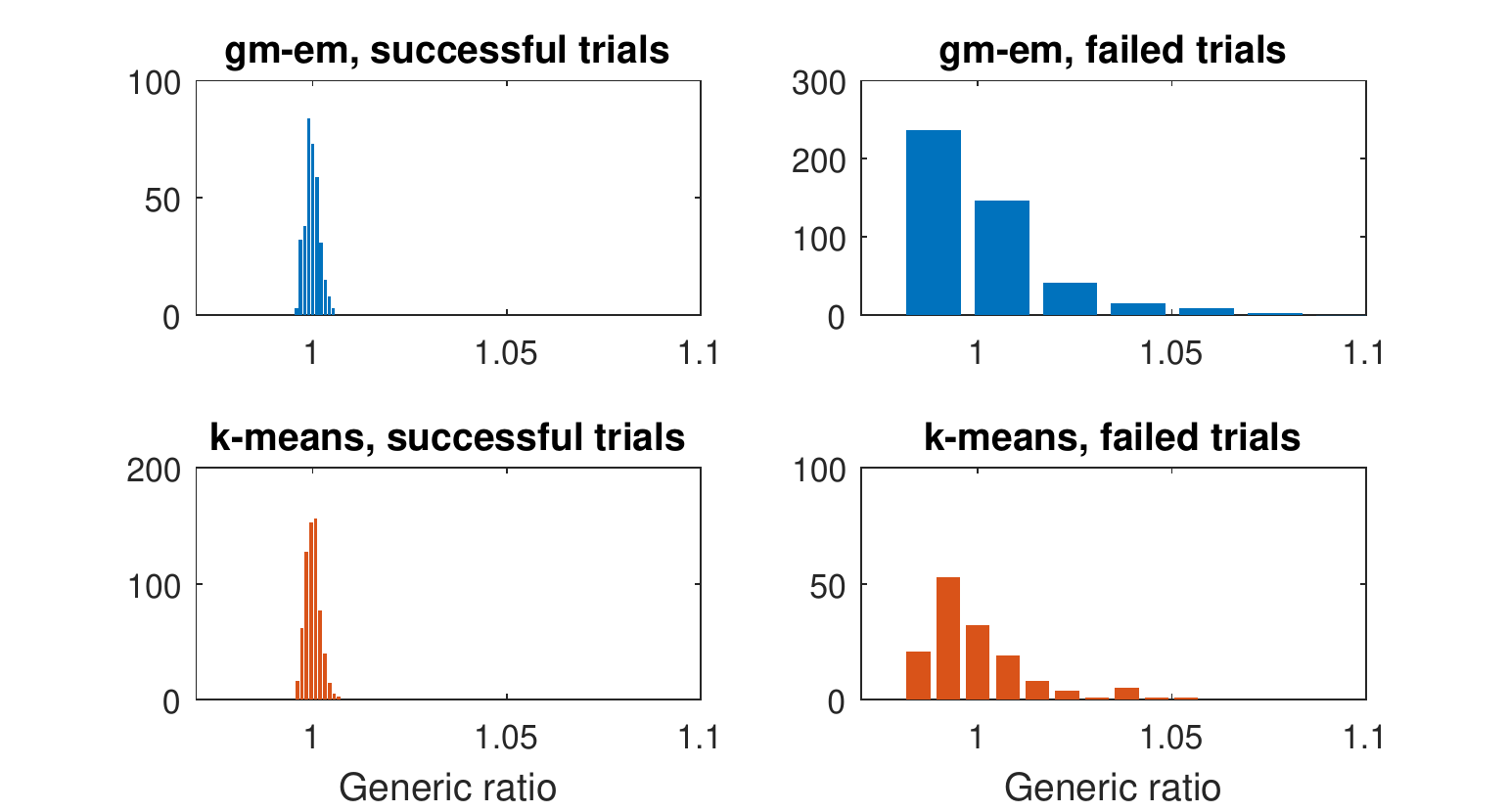}
		\subcaption{\label{fig:clustreshist}}
	\end{subfigure}
	\caption{(a) Clustering performance of 'gm-em' and 'kmeans' algorithms against generic ratio for 800 simulations; the yellow ground truth points on the very right hand-side indicate how the ground truth generic ratio concentrates (note they hide points of successful trials for both algorithms). (b) Distribution of the generic ratios for both algorithms in case of successful trials (performance exceeds 99\%) and failed trials (performance below 99\%). \label{fig:clustres}}
\end{figure*}

\subsubsection{Experiments}
We test this approach to detect bad clustering of a simulated dataset. We generate 5 random clusters in a 20 dimensional space. The cluster means are drawn at random from an isotropic Gaussian distribution with standard deviation 2. Cluster covariances are generated with random axes (with isotropic distribution) and eigenvalues. We test the performance of two clustering algorithm: K-means ('kmeans') and the Expectation Minimization algorithm based on the simulated Gaussian Mixture model ('gm-em'). The scatter plot shown on \cref{fig:clustresscat} suggests that the generic ratios are broadly distributed on the interval $[0.98,1.1]$ when the algorithms do not reach a good estimation of the original clusters. Comparison of the distributions of the generic ratio in case of success and failure of the clustering shown on \cref{fig:clustreshist} shows a much more concentrated distribution when the clusters are correctly retrieved. This suggests that a generic ratio far from one indeed witnesses the failure of the algorithm to cluster the data properly and could  be exploited to improve the performance of clustering algorithms.

\subsection{Learning causal generative models}
There as been a recent interest in learning complex generative models from data using deep neural networks. This has led to the design of Generative Adversarial Networks (GANs) \citep{goodfellow2014generative} which were able to produce impressively realistic synthetic images using several image datasets. The principle of GANs is to oppose the generative model $G$ to an adversarial discriminative model $D$: while the goal of $D$ is to differentiate real data from data generated by $G$, the goal of $G$ is to mislead $D$ such that it mistakenly considered generated images as real. We will argue that there is an intriguing connection between GANs and our group theoretic framework. First let us introduce an example GAN more formally. The generator's output random variable $\boldsymbol{X}\sim p_g$ is generated by applying a parametric mapping to a multivariate input noise variable $\boldsymbol{Z}$ using the function $G(\boldsymbol{Z},\boldsymbol{\theta}_g)$, typically implemented by a deep neural network. The distribution of $\boldsymbol{Z}$ is fixed and consists in $K$ i.i.d. uniformly distributed variables (say on the unit interval), and the parameters of the mapping (usually consisting in the synaptic weights of a deep neural network), are optimized. The optimization is done simultaneously with the parameters of a discriminator $D(\boldsymbol{X},\boldsymbol{\theta}_d)$ which outputs the probability that a sample $\boldsymbol{x}$ came form the data rather that $p_g$.

In principle, the optimization aims at solving the two-player minimax game $\min_{\boldsymbol{\theta}_g} \max_{\boldsymbol{\theta}_d} v(G,D)$ where
\begin{multline}
	v(G,D)=\mathbb{E}_{X\sim p_{data}}\!\left[\log D(\boldsymbol{X})\right]\\+\mathbb{E}_{\boldsymbol{X}\sim p_g}\left[\log (1-D(\boldsymbol{X}))\right]\,.
\end{multline}
If we focus on the optimization of the generative model, for a fixed discriminator it should minimize
\[
\mathbb{E}_{X\sim p_g}\left[\log (1-D(\boldsymbol{X}))\right]\,.
\]
However, as discussed in \citep{goodfellow2014generative,uehara2016generative}, it is heuristically preferable to maximize instead
\[
\mathbb{E}_{\boldsymbol{X}\sim p_g}\left[\log (D(\boldsymbol{X}))\right]\,.
\]
In the ideal case (yet in principle not reached in practice), the generator would be indistinguishable from the data by $D$ such that.
\begin{equation}\label{eq:GANequal}
\mathbb{E}_{\boldsymbol{X}\sim p_g}\left[\log (D(\boldsymbol{X}))\right]=
\mathbb{E}_{\boldsymbol{X}\sim p_{data}}\left[\log (D(\boldsymbol{X}))\right]
\,.
\end{equation}
\michel{either justify why now we use an expectation on both sides or find an expression for a single sample looking at the optimization procedure in GAN papers (training with one sample)}
Interestingly, this last equation can be interpreted as an averaged genericity equation. Indeed, as $\boldsymbol{X}=G(\boldsymbol{Z})$ with the coefficients of $\boldsymbol{Z}$ being uniform i.i.d., then it can be considered as a generative model whose cause $\boldsymbol{Z}$ has an invariant distribution on the unit hypercube $[0,\,1]^{K}$, the generic group being the group combining translations modulo 1 for each component \footnote{this is the direct product of the groups associated to each components}. Taking $D$ as a contrast, we can fix an input data point $z_0$ and rewrite the left-hand side \cref{eq:GANequal} as an EGC
\begin{multline}
\mathbb{E}_{\boldsymbol{X}\sim p_g}\left[\log (D(\boldsymbol{X}))\right]=\mathbb{E}_{g\sim \mu_{\G}}\left[\log (D(G(gz_0)))\right]\\=\left\langle D \right\rangle_{G,z_0}\,.
\end{multline}
As a consequence, the goal of the optimization of $G$ can be interpreted as building a generative model that satisfies a genericity equation based on the contrast defined by the discriminator $D$. Overall, training a GAN can be considered as applying the group theoretic framework to find a robust generative model, such that applying a generic transformation to the cause will lead to a corresponding output that will still look typical. In the case of a face generator for example, we want that when applying a generic transformation to an input $z_0$ that generates a true face image, the output still looks like a face afterwards. The characteristic ``looks like a face" is quantified by the contrast implemented by the discriminator $D$. As a consequence, a GAN can be seen in a group theoretic framework as a causal generative model whose structure and corresponding contrast are tuned based on a dataset to match the ICM postulate.

This interpretation, together with the empirical success of GANs, suggests that the principle of genericity can be used to learn robust generative models. The robustness may be further enforced by applying this principle not only at the input of the deep neural network implementing the generative model, but also at the level of intermediate representations of this network. This might be a way to train better GANs who capture reliably the hierarchical, and potentially causal, structure of the data.


\bibliographystyle{plainnat}
{
	\bibliography{causalityv2}
}
\newpage
\onecolumn
{\Huge Supplementary Information}
\section*{Appendix A: Elements of group theory}
Sets equipped with a group structure have attracted interest from the machine learning community because they can model the data structure of complex domains \citep{fukumizu2009characteristic}. We will introduce concisely the concepts and results of group theory necessary to this paper. The authors can refer for example to \citep{tung1985group,wijsman1990invariant,Eaton1989} for more details.
\begin{defn}[Group]
A set $\mathcal{G}$ is said to form a group if there is an operation `*', called group multiplication, such that:
\begin{enumerate}
\item For any $a,b\in \G$, $a*b\in \G$.
\item The operation is \textit{associative}: $a*(b*c)=(a*b)*c$, for all $a,b,c\in\mathcal{G}$,
\item  There is one identity element $e\in\mathcal{G}$ such that, $g*e=e$ for all $g\in\mathcal{G}$,
\item  Each $g\in\mathcal{G}$ has an inverse $g^{-1}\in\mathcal{G}$ such that, $g*g^{-1}=e$.
\end{enumerate}
A subset of $\mathcal{G}$ is called a subgroup if it is a group under the same multiplication operation.
\end{defn}
The following elementary properties are a direct consequence of the above definition:
$e^{-1}=e$,
$g^{-1}*g=e$,
$e*g=g$, for all $g\in\mathcal{G}$.

Among others, classical groups of interest in this paper are the permutations group $\mathbb{S}(n)$ and the \textit{general linear group} $GL(n)$ of all real nonsingular $n\times n$ matrices equipped with matrix multiplication. The matrix representations of the real orthogonal group $O(n)$ of isometries and of the real special orthogonal group $SO(n)$ of rotations are subgroups of $GL(n)$. As in these two examples, many groups can be considered as functions \textit{acting} on an input space:
\begin{defn}[Action]
Let $\G$ be a group and $\X$ a space. An action of $\G$ on $\X$ to the left is a function $a:\G\times\X\rightarrow\X,(g,x)\mapsto g.x$ such that:
\begin{enumerate}
\item $e.x=x$, for all $x\in\X$
\item $g_2.(g_1.x)=(g_2*g_1).x$, for all $g_1,g_2\in\G,\,x\in\X$
\end{enumerate}
If $g*x=x$, $x$ is called a fixed point of $g$. We will call the subgroup of elements fixing $x$, $\G_{x}=\{g\in\G,g*x=x\}$, the isotropy subgroup or stabilizer of $x$ in $\G$.
$\G$ is said to act freely if $gx\neq x$, for all $g\in\G\setminus\{e\}$ and $x\in\X$.
\end{defn}
Due to the properties of group actions, associativity rules can be applied to all group actions and group multiplications of a given expression, such that we can do not need to put any symbol for binary operations between group/space elements. For example, we will thus simply denote $g_1.((g_2*g_3).x)$ by $g_1g_2g_3x$.

We will always consider group actions to the left in this paper, such that we will simply call them group action. It is easy to show that $\mathbb{S}(n)$ and its subgroups act on the set $\{1,..,n\}$ by permuting its elements, as well as on $n$-tuples from arbitrary sets. $GL(n)$ and its subgroups act as linear functions on the vector space $\mathbb{R}^n$.

\begin{defn}[Topological group]
\label{def:topogroup}
A locally compact Hausdorff topological group is a group equipped with a locally compact Hausdorff topology such that:
\begin{itemize}
\item $\G\rightarrow \G: x\mapsto \inv{x}$ is continuous,
\item $\G\times\G\rightarrow \G: (x,y)\mapsto x.y$  is continuous (using the product topology).
\end{itemize}
The $\sigma$-algebra generated by all open sets of G is called the Borel algebra of $\G$. 
\end{defn}

\begin{defn}[Invariant measure]
Let $\G$ be a topological group according to definition~\ref{def:topogroup}. Let $K(\G)$ be the set of continuous real valued functions with compact support on $\G$. A radon measure $\mu$ defined on Borel subsets is left invariant if for all $f\in K(\G)$ and $g\in\G$
$$
\int_G f(\inv{g} x)d\mu(x)=\int_G f(x)d\mu(x)
$$
Such a measure is called a \textit{Haar measure}.
\end{defn}
A key result regarding topological groups is the existence and uniqueness up to a positive constant of the Haar measure \citep{Eaton1989}. Whenever $\G$ is compact, the Haar measures are finite and we will denote $\mu_\G$ the unique Haar measure such that $\mu_\G(\G)=1$, defining an invariant probability measure on the group.
\begin{defn}[Proper Mapping (\citep{wijsman1990invariant},theorem 2.2.3)]
Let $X$ and $Y$ be locally compact spaces and $f:X\rightarrow Y$ continuous, $f$ is proper if $f^{-1}[K]$ is compact for every compact $K\subset Y$
\end{defn}

\section*{Appendix B: ICM for linear dynamical systems}
Assume now that our cause effect pairs $\textbf{X}$ and $\textbf{Y}$ are weakly stationary time series where $\textbf{Y}$ is the output of a Linear Time Invariant Filter with input $\textbf{X}$:
\begin{equation}\label{eq:conv}
\textstyle{\textbf{Y}=\{\sum_{\tau\in \mathbb{Z}} h_\tau X_{t-\tau} \,\}=\textbf{h}\star\textbf{X}, }
\end{equation}
where $\textbf{h}$ denotes the {\it impulse response} of the filter being convolved to the input to provide the output. Then \citep{shajarisale2015} postulate the following:
\begin{post}[Spectral Independence Criterion (SIC)]
	Let $S_{xx}$ be the Power Spectral Density (PSD) of a cause $\textbf{X}$ and ${\bf h}$ the system impulse response of the causal system of (\ref{eq:conv}), then
	\begin{gather}\label{eq:sic}
	\textstyle{\int_{-1/2}^{1/2} S_{xx}(\nu) |\widehat{{\rm h}}(\nu)|^2 d\nu  = \int_{-1/2}^{1/2} S_{xx}(\nu) d\nu \cdot \int |\widehat{\rm h} (\nu)|^2 d\nu \,,}
	\end{gather}
	holds approximately, where $\widehat{\rm h} $ denotes the Fourier transform of ${\bf h} $.
\end{post}
It can be shown that (\ref{eq:sic}) is violated in the backward direction under mild assumptions. By using the power of the time series ${\bf Y}$  in (\ref{eq:conv}) as a contrast, we can retrieve the expression of the Spectral Independence for a proper CMU (the proof is in Appendix B). Since PSD's are even functions for real valued signals, the generic transformations are defined by their action on positive frequencies and the negative frequencies are built by symmetry.
\begin{prop}
	\label{thm:CMU-SIC}
	Let $\G$ be the group of modulo 1/2 translations that acts on the PSD by shifting its graph for positive frequencies ($\nu \in [0,\,1/2]$) while the graph for negative frequencies is defined so that the transformed PSD is even. Using the total power as a contrast, $\G-$genericity is equivalent to SIC. 
\end{prop}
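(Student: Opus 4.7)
The plan is to identify all the objects in the CMU explicitly and then verify that the genericity equation (2) reduces algebraically to the SIC equation (7). I would set the attribute space as the cone of even, integrable power spectral densities on $[-1/2,1/2]$, the mechanism associated with impulse response $\mathbf{h}$ as the pointwise-multiplication operator $m_{\mathbf{h}}:S \mapsto |\widehat{h}|^{2}\,S$ (this is the standard input-output relation for a linear time-invariant filter acting on PSDs), and the CMC as the total power $C(S)=\int_{-1/2}^{1/2} S(\nu)\,d\nu$. With these choices, $C(m_{\mathbf{h}} S_{xx}) = \int |\widehat{h}(\nu)|^{2} S_{xx}(\nu)\,d\nu$, which is exactly the left-hand side of SIC.

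Next I would parameterize $\mathcal{G}$ as $[0,1/2)$ with addition modulo $1/2$; compactness is automatic and its Haar probability measure $\mu_{\mathcal{G}}$ has density $2$ with respect to Lebesgue measure on $[0,1/2]$. The action of $g_{\phi}\in\mathcal{G}$ on a PSD $S$ is specified on positive frequencies by the circular shift $(g_{\phi}S)(\nu)=S((\nu-\phi)\bmod 1/2)$ for $\nu\in[0,1/2]$, and on negative frequencies by even extension; because both $|\widehat{h}|^{2}$ and $S_{xx}$ are even, any integral over $[-1/2,1/2]$ can be replaced by twice its restriction to $[0,1/2]$, which lets me carry out all manipulations on the positive half-axis where the group acts naturally.

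Then I would compute the EGC by writing
\[
\langle C\rangle_{m_{\mathbf{h}},S_{xx}} \;=\; \int_{0}^{1/2}\!\!2\,d\phi\;\cdot\;2\!\int_{0}^{1/2}\!|\widehat{h}(\nu)|^{2}\,S_{xx}\bigl((\nu-\phi)\bmod 1/2\bigr)\,d\nu,
\]
swap the order of integration (Fubini applies since $|\widehat{h}|^{2} S_{xx}\in L^{1}$), and use the translation invariance of Lebesgue measure modulo $1/2$ to evaluate the inner $\phi$-integral as $\int_{0}^{1/2} S_{xx}(u)\,du$, which is independent of $\nu$. Pulling this factor out and undoing the even-symmetry reductions converts the factor $4$ into two halves that exactly reconstitute the integrals over $[-1/2,1/2]$, yielding
\[
\langle C\rangle_{m_{\mathbf{h}},S_{xx}} \;=\; \int_{-1/2}^{1/2}|\widehat{h}(\nu)|^{2}\,d\nu\;\cdot\;\int_{-1/2}^{1/2}S_{xx}(\nu)\,d\nu.
\]

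Comparing this with the explicit expression for $C(m_{\mathbf{h}}S_{xx})$ from the first step, the genericity equation $C(m_{\mathbf{h}}S_{xx})=\langle C\rangle_{m_{\mathbf{h}},S_{xx}}$ becomes term-for-term identical with the SIC postulate. The converse direction is then immediate because SIC is itself precisely this identity. The only delicate point I foresee is accounting correctly for the factors of two that arise from (a) the Haar density being $2$ rather than $1$, and (b) the restriction of the group action to positive frequencies combined with the even extension, so that the final normalization matches the unnormalized integrals on $[-1/2,1/2]$ appearing in (7); this is the step where a sign or factor error would most easily slip in, but it is purely bookkeeping rather than a substantive mathematical difficulty.
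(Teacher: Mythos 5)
Your proposal is correct and follows essentially the same route as the paper's proof: express the EGC as a double integral over frequency and the mod-$1/2$ shift, apply Fubini, use translation invariance of the shift to factor out $\int S_{xx}$, and track the factors of two coming from evenness and the Haar density so that the genericity equation reduces to the SIC identity. Your additional explicit identification of the mechanism as pointwise multiplication by $|\widehat{h}|^2$ on the PSD (and hence of $C(m S_{xx})$ with the SIC left-hand side) is implicit in the paper and is handled correctly.
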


\subsection*{Proof of proposition~\ref{thm:CMU-SIC}}
Suppose that for a given mechanism $m$ and given input $S_{xx}$ the $\G-$genericity assumption is satisfied. Noticing that $\mu_{\G}$ is the uniform probability measure over $[0,\,1/2]$. This amounts to
\begin{gather*}
\int_{-1/2}^{1/2} S_{xx}(\nu)|\widehat{{\rm h}}(\nu)|^2 d\nu=\int_{0}^{1/2} \left(2\int_{0}^{1/2} |\widehat{\rm h}(\nu)|^2 S_{xx}(\nu-g)\mu_G(g)d\nu \right) dg\\
= 4\int_{0}^{1/2}\int_{0}^{1/2} |\widehat{\rm h}(\nu)|^2 S_{xx}(\nu-g)d\nu dg\\
= 4\int_{0}^{1/2} |\widehat{\rm h}(\nu)|^2 \left(\int_{0}^{1/2}S_{xx}(\nu-g)dg \right) d\nu\\
=\int_{-1/2}^{1/2} S_{xx}(\nu) d\nu \cdot  \int_{-1/2}^{1/2} |\widehat{\rm h} (\nu)|^2 d\nu
\end{gather*}
This corresponds to the formula of the SIC postulate.\ $\square$
\section*{Appendix C: proofs of main text results}

\subsection*{Equation \ref{eq:expectU}}
We will use $\mathbb{E}_U$ to denote the expectation when $U$ is drawn from the distribution on $SO(n)$ (or sometimes just $\mathbb{E}$ when it does not lead to confusion).
Since $A$ is symmetric then we can decompose it as $A = V^T D V$, with $D$ diagonal and $V\in SO(n)$. Then 
\[\mathbb{E}_U\tr \left(U^T A U B\right) = \mathbb{E}_{VU}\tr \left((VU)^T D VU B\right) = \mathbb{E}_U\tr \left(U^T D UB\right),\]
where we substituted $VU$ for $U$ in the expectation and used the translation invariance of the Haar measure.
As a consequence, 
\[\mathbb{E}\tr \left(U^T A U B\right)=\mathbb{E}\tr \left(D UBU^T\right)=\sum d_{kk} \mathbb{E} \left(U B U^T\right)_{kk}\]
by cyclic invariance and linearity of the trace. We claim that the values of the diagonal elements of $\mathbb{E}\left(U B U^T\right)$ are all equal. Indeed, let $P_{i,k}\in SO(n)$ be the matrix permuting coordinates $i$ and $k$. Then
\[\mathbb{E} \left(U B U^T \right)_{kk}=
\left(P_{i,k}\left(\mathbb{E}\left(U B U^T\right)\right)P_{i,k}^T\right)_{ii}=
\mathbb{E}\left(P_{i,k}U B\left(P_{i,k}U\right)^T\right)_{ii}=
\mathbb{E}\left(UBU^T\right)_{ii}\]
again by substitution the translation invariance of the Haar measure.
Therefore, for all $k$, 
\[\mathbb{E}\left(U B U^T\right)_{kk}=
\textstyle\frac{1}{n} \mathbb{E}\tr \left(U B U^T \right)=
\frac{1}{n}\mathbb{E}\tr \left(U^T U B\right)=
\frac{1}{n} \tr (B)\]
since $U$ is orthogonal. Finally we get 
\[\mathbb{E}_U\tr \left(U^T A U B\right)=\textstyle\frac{1}{n}\tr (B)\sum d_{kk}=\textstyle\frac{1}{n}\tr (B)\tr (A).\]

\subsection*{Equation~\ref{eq:ratio}}
This is because for every fixed $\tilde{\alpha}$, the denominator of \eqref{eq:ratio} is a constant
and thus the conditional expectation of \eqref{eq:ratio}, given $\tilde{X}$,
is $1$. Hence, the overall expectation is $1$.

\subsection*{Proof of proposition~\ref{prop:nmfcontr}}
We will denote $\Ones$ the  $n\times n$ all-ones matrix (whose elements are all equal to one) and $\ones$ the $n$-dimensional all-ones column vector, such that 
\[
\Ones=\ones \ones^\top\,.
\]
We first prove the following.
\begin{lem}\label{lem:schurperm}
	Let $A$ be a $n\times n$ real matrix, $P$ a random permutation matrix Haar distributed. Then
	\[
	\mathbb{E}_{P\sim\mu_{\mathbb{S}}} P A P^\top=B 
	\begin{bmatrix}
	\alpha {\bf I}_{n-1} & {\bf 0}\\
	{\bf 0} &\lambda
	\end{bmatrix}
	B^\top
	\]
	where $\alpha=\frac{1}{n-1}\left(\tr(A) -\frac{1}{n} \tr (\Ones A) \right) $, $\lambda=\frac{1}{n} \tr (\Ones A)$ and $B$ is an arbitrary orthogonal matrix whose last column is $\frac{\ones}{\sqrt{n}}$.
\end{lem}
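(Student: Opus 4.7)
My plan is to compute $\mathbb{E}_{P\sim\mu_\mathbb{S}}[PAP^\top]$ entry-wise by averaging over all permutations, then recognize the resulting matrix as a simple linear combination of $\mathbf{I}_n$ and $\Ones = \ones\ones^\top$, which is diagonal in any orthonormal basis whose last column is $\ones/\sqrt{n}$.

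Concretely, first I would write $(PAP^\top)_{ij} = A_{\sigma^{-1}(i),\sigma^{-1}(j)}$ where $\sigma$ is the permutation represented by $P$. Since $\mu_\mathbb{S}$ is uniform over $\mathbb{S}_n$, averaging gives two cases. For diagonal entries, $\mathbb{E}[A_{\sigma^{-1}(i),\sigma^{-1}(i)}] = \tfrac{1}{n}\sum_k A_{kk} = \tfrac{1}{n}\tr(A)$, independent of $i$. For off-diagonal entries ($i\neq j$), the pair $(\sigma^{-1}(i),\sigma^{-1}(j))$ ranges uniformly over ordered pairs of distinct indices, so $\mathbb{E}[A_{\sigma^{-1}(i),\sigma^{-1}(j)}] = \tfrac{1}{n(n-1)}\sum_{k\neq l} A_{kl} = \tfrac{1}{n(n-1)}(\tr(\Ones A)-\tr(A))$, again independent of the chosen $(i,j)$. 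Setting $s=\tr(A)$ and $t=\tr(\Ones A)$, this yields
\[
\mathbb{E}_{P\sim\mu_\mathbb{S}}[PAP^\top] = \frac{s}{n}\,\mathbf{I}_n + \frac{t-s}{n(n-1)}\,(\Ones - \mathbf{I}_n).
\]

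Next I would diagonalize this expression using the spectrum of $\Ones$: the vector $\ones/\sqrt{n}$ is an eigenvector of $\Ones$ with eigenvalue $n$, and the orthogonal complement $\ones^\perp$ is the eigenspace for eigenvalue $0$. Substituting, on the line spanned by $\ones/\sqrt{n}$ the eigenvalue of $\mathbb{E}[PAP^\top]$ is $\tfrac{s}{n} + \tfrac{t-s}{n(n-1)}(n-1) = \tfrac{t}{n} = \lambda$, and on $\ones^\perp$ it is $\tfrac{s}{n} - \tfrac{t-s}{n(n-1)} = \tfrac{sn - t}{n(n-1)} = \tfrac{1}{n-1}\!\left(s - \tfrac{t}{n}\right) = \alpha$. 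Hence $\mathbb{E}[PAP^\top]$ acts as $\alpha$ on the $(n-1)$-dimensional subspace $\ones^\perp$ and as $\lambda$ on $\mathrm{span}(\ones)$.

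Finally, for any orthogonal matrix $B$ whose last column equals $\ones/\sqrt{n}$, the first $n-1$ columns form an orthonormal basis of $\ones^\perp$, so $B^\top \mathbb{E}[PAP^\top] B$ is diagonal with the first $n-1$ diagonal entries equal to $\alpha$ and the last equal to $\lambda$, which is exactly the stated block form. The only subtlety worth double-checking is the off-diagonal count $n(n-1)$ and the identity $\tr(\Ones A)=\ones^\top A\ones=\sum_{i,j}A_{ij}$; otherwise the argument is a routine symmetry calculation. No step looks like a serious obstacle.
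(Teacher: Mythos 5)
Your proof is correct, but it takes a genuinely different route from the paper's. The paper argues structurally: it observes that $\mathbb{E}_{P\sim\mu_{\mathbb{S}}}PAP^\top$ commutes with every permutation matrix, invokes Schur's lemma on the two invariant subspaces of the permutation action (the span of $\ones$ and its orthogonal complement, the latter being irreducible), concludes the average is a scalar on each subspace, and then recovers $\alpha$ and $\lambda$ by taking traces of $\mathbb{E}\,PAP^\top$ and of $\Ones\cdot\mathbb{E}\,PAP^\top$. You instead compute the average entry-wise, obtaining the explicit closed form $\mathbb{E}[PAP^\top]=\tfrac{s}{n}\mathbf{I}_n+\tfrac{t-s}{n(n-1)}(\Ones-\mathbf{I}_n)$ with $s=\tr(A)$, $t=\tr(\Ones A)$, and then diagonalize it directly via the spectrum of $\Ones$; your bookkeeping (the $n(n-1)$ ordered off-diagonal pairs, the identity $\tr(\Ones A)=\sum_{i,j}A_{ij}$, and the two eigenvalues $\lambda=t/n$ on $\mathrm{span}(\ones)$ and $\alpha=\tfrac{1}{n-1}(s-t/n)$ on $\ones^\perp$) all checks out. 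Your approach is more elementary and self-contained --- it needs no representation theory and even yields a slightly stronger statement, namely the explicit formula for the averaged matrix --- whereas the paper's Schur-lemma argument is shorter once one grants the irreducibility of the standard representation and generalizes more readily to averaging over other groups whose invariant subspace decomposition is known.
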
 
\begin{proof}[Sketch of the proof]
	We first notice that $\mathbb{E}_{P\sim\mu_{\mathbb{S}}} P A P^\top$ commutes with all permutation matrices. 
	Since the group of permutation matrices has two invariant subspaces: the linear span of $\ones$ and its orthogonal complement, we can use Schur's lemma implying that $\mathbb{E}_{P\sim\mu_{\mathbb{S}}} P A P^\top$ is a multiple of the identity on each subspace. This implies that in the basis of the columns of $B$, $\mathbb{E}_{P\sim\mu_{\mathbb{S}}} P A P^\top$ is diagonal. The values of $\alpha$ and $\lambda$ are deduced from computing the traces of $\mathbb{E}_{P\sim\mu_{\mathbb{S}}} P A P^\top$ and $\Ones\cdot\mathbb{E}_{P\sim\mu_{\mathbb{S}}} P A P^\top$
\end{proof}

\begin{proof}[Proof of proposition~\ref{prop:nmfcontr}]
	Because of the invariance of the Haar measure, any permutation matrix $Q$ we have
	\[
	\mathbb{E}_{P\sim \mu_{\mathbb{S}}} \left[\tr\left(P \mathbf{W}^\top \mathbf{W} P^\top \mathbf{V}^\top \mathbf{V}\right)\right]=	\mathbb{E}_{P\sim \mu_{\mathbb{S}}} \left[\tr\left(QP \mathbf{W}^\top \mathbf{W} P^\top Q^\top \mathbf{V}^\top \mathbf{V}\right)\right].
	\]
	Thus
	\[
	\mathbb{E}_{P\sim \mu_{\mathbb{S}}} \left[\tr\left(P \mathbf{W}^\top \mathbf{W} P^\top \mathbf{V}^\top \mathbf{V}\right)\right]=	\mathbb{E}_{P\sim \mu_{\mathbb{S}},Q\sim \mu_{\mathbb{S}}} \left[\tr\left(P \mathbf{W}^\top \mathbf{W} P^\top Q^\top \mathbf{V}^\top \mathbf{V} Q\right)\right].
	\]
	Using lemma~\ref{lem:schurperm}, the expression inside the trace becomes a product of two diagonal matrices, leading to
	\[
	\mathbb{E}_{P\sim \mu_{\mathbb{S}}} \left[\tr\left(P \mathbf{W}^\top \mathbf{W} P^\top \mathbf{V}^\top \mathbf{V}\right)\right]=	(n-1)\alpha\beta+\lambda\gamma\,.
	\]
	with $\alpha=\frac{1}{n-1}\left(\tr(\mathbf{W}^\top \mathbf{W}) -\frac{1}{n} \tr (\Ones \mathbf{W}^\top \mathbf{W}) \right) $, $\beta=\frac{1}{n-1}\left(\tr(\mathbf{V}^\top \mathbf{V}) -\frac{1}{n} \tr (\Ones \mathbf{V}^\top \mathbf{V}) \right) $, $\lambda=\frac{1}{n} \tr (\Ones \mathbf{W}^\top \mathbf{W})$ and $\gamma=\frac{1}{n} \tr (\Ones \mathbf{V}^\top \mathbf{V})$.
	We can then easily check that the two terms in $\alpha$ and $\beta$ can be factored to lead to the expected expression.
\end{proof}

\subsection{Proof of proposition~\ref{prop:clustgene}}

\begin{proof}
	Decomposing $ \boldsymbol{X}  \boldsymbol{X} ^\top  \boldsymbol{X}  \boldsymbol{X} ^\top$ using $ \boldsymbol{X} =\boldsymbol{M}+\boldsymbol{V}$ 
	\begin{multline*}
		 \boldsymbol{X}   \boldsymbol{X} ^\top  \boldsymbol{X}   \boldsymbol{X} ^\top =		\left( \boldsymbol{V}  \boldsymbol{V} ^\top +    \boldsymbol{V}  \boldsymbol{M} ^\top+ \boldsymbol{M}  \boldsymbol{V} ^\top+ \boldsymbol{M}  \boldsymbol{M} ^\top \right)^2\\
		 \boldsymbol{V}  \boldsymbol{V} ^\top  \boldsymbol{V}  \boldsymbol{V} ^\top+ \boldsymbol{V}  \boldsymbol{M} ^\top  \boldsymbol{V}  \boldsymbol{M} ^\top+ \boldsymbol{M}  \boldsymbol{V} ^\top  \boldsymbol{M}  \boldsymbol{V} ^\top+ \boldsymbol{M}  \boldsymbol{M} ^\top  \boldsymbol{M}  \boldsymbol{M} ^\top\\
		+ \boldsymbol{V}  \boldsymbol{V} ^\top  \boldsymbol{V}  \boldsymbol{M} ^\top+  \boldsymbol{V}  \boldsymbol{V} ^\top  \boldsymbol{M}  \boldsymbol{V} ^\top +  \boldsymbol{V}  \boldsymbol{V} ^\top  \boldsymbol{M}  \boldsymbol{M} ^\top\\
		+ \boldsymbol{V}  \boldsymbol{M} ^\top  \boldsymbol{V}  \boldsymbol{V} ^\top+ \boldsymbol{V}  \boldsymbol{M} ^\top  \boldsymbol{M}  \boldsymbol{V} ^\top + \boldsymbol{V}  \boldsymbol{M} ^\top  \boldsymbol{M}  \boldsymbol{M} ^\top\\
		+ \boldsymbol{M}  \boldsymbol{V} ^\top  \boldsymbol{V}  \boldsymbol{V} ^\top + \boldsymbol{M}  \boldsymbol{V} ^\top  \boldsymbol{V}  \boldsymbol{M} ^\top+ \boldsymbol{M}  \boldsymbol{V} ^\top  \boldsymbol{M}  \boldsymbol{M} ^\top\\
		+ \boldsymbol{M}  \boldsymbol{M} ^\top  \boldsymbol{V}  \boldsymbol{V} ^\top+ \boldsymbol{M}  \boldsymbol{M} ^\top  \boldsymbol{V}  \boldsymbol{M} ^\top+ \boldsymbol{M}  \boldsymbol{M} ^\top  \boldsymbol{M}  \boldsymbol{V} ^\top
		\end{multline*}
		Taking the expectation and the trace and using $\tr \left[AB^\top CD^\top\right]=\tr \left[BA^\top DC^\top\right]\tr=\left[CD^\top AB^\top\right]=\tr \left[DC^\top BA^\top\right]$, we get for the contrast
		\begin{multline*}
		\mathbb{E}\tr \left( \boldsymbol{X}   \boldsymbol{X} ^\top  \boldsymbol{X}   \boldsymbol{X} ^\top \right)=\mathbb{E}_z\mathbb{E}_{ \boldsymbol{X} |z}\tr \left( \boldsymbol{X}   \boldsymbol{X} ^\top  \boldsymbol{X}   \boldsymbol{X} ^\top \right)\\=\sum \pi_k \left(\|\boldsymbol{\mu}_k\|^4+\mathbb{E}_{\boldsymbol{V}|z}\tr \left[\boldsymbol{V} \boldsymbol{V}^\top \boldsymbol{V} \boldsymbol{V}^\top \right]+4\boldsymbol{\mu}_k^\top\boldsymbol{\Sigma}_k\boldsymbol{\mu}_k+2\|\boldsymbol{\mu}_k\|^2\tr\left[\boldsymbol{\Sigma}_k\right]\right)
		\end{multline*}
		Since the $\boldsymbol{V}|z$ is gaussian, the fourth order cumulants are zero, which leads to a simplification of the second term of the last expression
		\[
		\mathbb{E}_{\boldsymbol{V}|z}\tr \left[\boldsymbol{V} \boldsymbol{V}^\top \boldsymbol{V} \boldsymbol{V}^\top \right]=\sum_k \pi_k \left(\tr\left[\boldsymbol{\Sigma}_k\right]^2+2\tr\left[\boldsymbol{\Sigma}_k^2\right]\right)
		\]
		We can notice that all terms but one are influenced by introducing a generic transformation $Y\mapsto UY$ with $U\in SO(p)$, hence the result.
\end{proof}

\end{document}